\documentclass{article}

\usepackage{arxiv_preprint,times}
\usepackage{microtype}

\usepackage{amsmath,amsfonts,bm}

\def\eqref#1{equation~\ref{#1}}
\def\1{\bm{1}}

\DeclareMathAlphabet{\mathsfit}{\encodingdefault}{\sfdefault}{m}{sl}
\SetMathAlphabet{\mathsfit}{bold}{\encodingdefault}{\sfdefault}{bx}{n}

\usepackage{float}
\usepackage{amssymb}
\usepackage{array}
\usepackage{booktabs}
\usepackage{enumitem}
\usepackage{graphicx}
\usepackage{listings}
\usepackage{xcolor}
\usepackage{colortbl}
\definecolor{ccBlue}{HTML}{2A78D6}
\definecolor{ccOrange}{HTML}{EB6834}
\definecolor{ccGradeOne}{HTML}{86B6EF}
\definecolor{ccGradeZero}{HTML}{2A78D6}
\definecolor{ccNoVerdict}{HTML}{104281}
\definecolor{ccAuditRow}{HTML}{EDF4FC}
\usepackage{xspace}
\usepackage{tikz}
\usetikzlibrary{arrows.meta,backgrounds,fit,positioning,calc,shapes.misc,shapes.geometric}
\usepackage{hyperref}
\hypersetup{hidelinks}
\usepackage{url}

\usepackage{amsthm}

\AddToHook{env/table/begin}{\setlength{\belowcaptionskip}{4pt}}

\newtheorem{theorem}{Theorem}[section]
\newtheorem{lemma}[theorem]{Lemma}

\newcommand{\Fyan}{\textsc{Fyan}\xspace}

\newcommand{\Specifier}{\textsc{Specifier}\xspace}
\newcommand{\QED}{\textsc{QED}\xspace}
\newcommand{\Reasoner}{\textsc{Reasoner}\xspace}
\newcommand{\Planner}{\textsc{Planner}\xspace}
\newcommand{\Formalizer}{\textsc{Formalizer}\xspace}
\newcommand{\Archon}{\textsc{Archon}\xspace}
\newcommand{\Curator}{\textsc{Curator}\xspace}
\newcommand{\Auditor}{\textsc{Auditor}\xspace}
\newcommand{\Orchestrator}{\textsc{Orchestrator}\xspace}
\newcommand{\code}[1]{\texttt{#1}}
\newcommand{\link}[1]{Link~#1}

\usepackage{amsthm}
\usepackage{multirow}
\usepackage{pifont}
\makeatletter
\@ifundefined{theorem}{\newtheorem{theorem}{Theorem}}{}
\@ifundefined{lemma}{\newtheorem{lemma}{Lemma}}{}
\@ifundefined{definition}{}{}
\makeatother

\newcommand{\Ntcs}{143}

\newcommand{\ftfpFyan}{86}          % 86/143
\newcommand{\ftfpFyanPct}{60.1}
\newcommand{\ftfpFyanCI}{52.0--67.8} % Wilson 95%
\newcommand{\ftfpCtl}{69}           % 69/143
\newcommand{\ftfpCtlPct}{48.3}
\newcommand{\ftfpCtlCI}{40.2--56.4}
\newcommand{\ftfpGain}{11.9}        % percentage points
\newcommand{\ftfpFisher}{0.057}     % unpaired two-sided Fisher exact test

\newcommand{\tnpFyan}{0.851}        % 143 items, sum 121.74 (3 unscored count as 0)

\newcommand{\tnpCtl}{0.501}         % 71.59/143: 109 scored items (mean 0.657), 34 without a scored proof count as 0 (re-confirmed 2026-09-25)
\newcommand{\tnpCtlUnscored}{34}

\newcommand{\ntftFyan}{8}           % faithfulness-judge run (September 18), author-confirmed
\newcommand{\ntftFyanPct}{5.6}
\newcommand{\ntftCtl}{5}
\newcommand{\ntftCtlPct}{3.5}
\newcommand{\ntftFisher}{0.57}      % two-sided Fisher, 8/143 vs 5/143
\newcommand{\genRefs}{91}           % references declaring structure/inductive/class
\newcommand{\defRefs}{52}           % references with definitions only
\newcommand{\genFisher}{0.0056}     % one-sided: all 5 control successes among the 52
\newcommand{\pubBest}{11.2}         % best system in FormalTCS v3 (Pass@8)

\newcommand{\pnsPass}{216}
\newcommand{\pnsElig}{343}          % items whose statement elaborates under Lean 4.32.2
\newcommand{\pnsAll}{371}
\newcommand{\pnsPct}{63.0}          % 216/343
\newcommand{\pnsCI}{57.7--67.9}     % Wilson 95%
\newcommand{\odeUnits}{23}
\newcommand{\odeDone}{20}
\newcommand{\odeLines}{9{,}355}
\newcommand{\odeThm}{34}
\newcommand{\odeLem}{118}

\newcommand{\ccTN}{300}
\newcommand{\ccTAuditAcc}{0.807}  \newcommand{\ccTAuditAccCI}{0.760--0.850}
\newcommand{\ccTAuditP}{0.771}      \newcommand{\ccTAuditR}{0.873}   \newcommand{\ccTAuditFone}{0.819}
\newcommand{\ccTDirAcc}{0.883}    \newcommand{\ccTDirAccCI}{0.847--0.917}
\newcommand{\ccTDirP}{0.939}        \newcommand{\ccTDirR}{0.820}     \newcommand{\ccTDirFone}{0.875}
\newcommand{\ccTMcNemar}{0.007}
\newcommand{\ccTAuditFP}{39}  \newcommand{\ccTAuditFN}{19}
\newcommand{\ccTDirFP}{8}    \newcommand{\ccTDirFN}{27}
\newcommand{\ccTPolicy}{18}    % grade-1 false alarms
\newcommand{\ccTGradeOne}{51}  \newcommand{\ccTGradeOneInc}{33}  % all grade-1 statements; of those labelled inconsistent
\newcommand{\ccTLegal}{18}    % of those, legal mutations or argued reformulations (author re-check: all 18)
\newcommand{\ccTMechFP}{21}    % false alarms from audit errors (strict, unpaired, source, no verdict) = 39 - 18
\newcommand{\ccTNoVerdict}{10}
\newcommand{\ccAdjN}{300}  \newcommand{\ccAdjRelabel}{56}  \newcommand{\ccAdjSame}{94}
\newcommand{\ccAdjEqNotSame}{36}  \newcommand{\ccAdjUnstated}{12}  \newcommand{\ccAdjNotEq}{8}
\newcommand{\ccAdjAuditR}{0.777}  \newcommand{\ccAdjAuditP}{0.941}  \newcommand{\ccAdjAuditAcc}{0.813}  \newcommand{\ccAdjAuditFone}{0.851}
\newcommand{\ccAdjAuditTP}{160}  \newcommand{\ccAdjAuditFN}{46}  \newcommand{\ccAdjAuditTN}{84}  \newcommand{\ccAdjAuditFP}{10}
\newcommand{\ccAdjDirR}{0.636}  \newcommand{\ccAdjDirP}{1.000}  \newcommand{\ccAdjDirAcc}{0.750}  \newcommand{\ccAdjDirFone}{0.777}
\newcommand{\ccTAuditFPgZero}{18}  \newcommand{\ccTAuditFPnv}{3}  % audit false alarms at grade 0 / without a valid record
    
\newcommand{\ccAdjInc}{206}  \newcommand{\ccAdjCon}{94}
\newcommand{\ccVerFPNon}{29}  \newcommand{\ccVerFPSame}{10}  % the 39 audit rejections: not the text's statement / same
\newcommand{\ccAdjAuditAccCI}{0.767--0.853}  \newcommand{\ccAdjDirAccCI}{0.700--0.797}
\newcommand{\ccAdjAuditRCI}{0.717--0.833}  \newcommand{\ccAdjDirRCI}{0.571--0.702}
\newcommand{\ccAdjGradeThree}{6}    % grade-3 statements (all verified consistent); precision at threshold 3 (recall 1.000)
\newcommand{\ccAdjDirAcceptsRelabel}{48}  \newcommand{\ccAdjAuditRejectsRelabel}{29}  % of the 56 relabelled statements
\newcommand{\ccAdjMcAudit}{43}  \newcommand{\ccAdjMcDir}{24}  \newcommand{\ccAdjMcP}{0.027}
\newcommand{\ccVerGOneNon}{13}  \newcommand{\ccVerGOneArg}{9}  \newcommand{\ccVerGOneOther}{4}  \newcommand{\ccVerGOneSame}{5}
\newcommand{\ccVerGZeroNon}{14}  \newcommand{\ccVerGZeroSame}{4}  \newcommand{\ccVerNvNon}{2}  \newcommand{\ccVerNvSame}{1}  \newcommand{\ccVerRestSame}{5}
\newcommand{\ccAdjDirTP}{131}  \newcommand{\ccAdjDirFN}{75}  \newcommand{\ccAdjDirTN}{94}  \newcommand{\ccAdjDirFP}{0}
\newcommand{\ccTNoVerdictInc}{7}  \newcommand{\ccTNoVerdictCon}{3}  % no-verdict runs on expert-inconsistent / -consistent items
\newcommand{\ccTExclR}{0.867}  \newcommand{\ccTExclP}{0.775}
\newcommand{\ccTOpenTP}{124}  \newcommand{\ccTOpenR}{0.827}  \newcommand{\ccTOpenAcc}{0.793}
\newcommand{\ccTOnlyAudit}{22}  % items right only for the audit
\newcommand{\ccTOnlyDir}{45}    % items right only for the direct judge
\newcommand{\ccTStruct}{6}     % of those, structural errors (scope, quantifier, interval)
\newcommand{\ccTAuditCost}{0.15}  \newcommand{\ccTDirCost}{0.007}   % CNY per item
\newcommand{\ccTThrOneTP}{98}  \newcommand{\ccTThrOneR}{0.653}  \newcommand{\ccTThrOneP}{0.824}

\title{\Fyan: A Human--AI Harness with Semantic Auditing for Document-Level Formalization}
\author{%
\textbf{Wei Zhao}$^{1}$\thanks{Equal contribution.} \quad
\textbf{Yangshuo Zou}$^{7}$\footnotemark[1] \quad
\textbf{Chengxiang Ding}$^{3}$ \quad
\textbf{Yifan Wu}$^{3}$\\
\textbf{Xuchuan Wang}$^{4}$ \quad
\textbf{Zimu Mao}$^{5}$ \quad
\textbf{Lei Zhang}$^{1,2}$\thanks{Corresponding authors.} \quad
\textbf{Tao Luo}$^{1,2,6}$\footnotemark[2]\\[0.75em]
{\small $^1$School of Mathematical Sciences, Shanghai Jiao Tong University, Shanghai 200240, China}\\
{\small $^2$Institute of Natural Sciences, MOE-LSC, Shanghai Jiao Tong University, Shanghai 200240, China}\\
{\small $^3$Zhiyuan College, Shanghai Jiao Tong University, Shanghai 200240, China}\\
{\small $^4$School of Biomedical Engineering, Shanghai Jiao Tong University, Shanghai 200240, China}\\
{\small $^5$School of Materials Science and Engineering, Shanghai Jiao Tong University, Shanghai 200240, China}\\
{\small $^6$CMA-Shanghai, Shanghai Jiao Tong University, Shanghai 200240, China}\\
{\small $^7$University of California, Berkeley, CA 94720, USA}%
}
\date{}
\begin{document}

\maketitle

\begin{abstract}
We present \Fyan, a human–AI harness for document-level mathematical formalization. Rather than treating theorems in isolation, \Fyan coordinates an end-to-end workflow spanning specification, proof planning, logical review, Lean proof construction, knowledge curation, and validation, with support for independent supervision and human guidance. 
A central component is evidence-grounded semantic auditing, which assesses whether formal statements faithfully preserve their informal specifications. 
A language model constructs structured evidence over local correspondences, omissions, scope, and logical relations, while a deterministic validator checks this evidence and produces reproducible judgments. 
When a substantive but admissible deviation is accepted, \Fyan requires an explicit proof-transfer obligation connecting the formal statement back to a source-facing interpretation. 
With the same model (DeepSeek-V4.1-Flash) in every stage, \Fyan proves 86 of 143 FormalTCS theorems under a strict Lean check, against 69 for a general agent harness, and raises the natural-language proof score from 0.501 to 0.851. 
On ConsistencyCheck, its semantic audit catches more inconsistent statements than a direct LLM judge, both on labels verified against the source (recall 0.777 vs. 0.636) and on the original labels (0.873 vs. 0.820), and localizes each mismatch it reports to a specific hypothesis, conclusion, or scope. 
\Fyan also built ODENumLib, a 9,355-line Lean library for the numerical analysis of ordinary differential equations.
% On real misformalizations, the measured v1
% slot rubric attains higher error recall than a direct judge (0.549 versus
% 0.324 on the \code{pairs} set), at lower specificity and greater cost. The
% relation-rubric and system-level evaluations remain pending and are not claimed
% as completed results.
% ---- new ----
\iffalse
We propose a four-level grade for semantic fidelity, computed in code from
local comparisons of the statement with its source, that separates a
statement aligned with the text from one whose proof merely transfers to it.
With DeepSeek-V4.1-Flash in every stage, \Fyan proves \ftfpFyan{} of \Ntcs{}
FormalTCS theorems under a strict Lean check against \ftfpCtl{} for a general
agent harness, and builds ODENumLib, a \odeLines-line Lean library of
numerical analysis. On ConsistencyCheck the audit finds more inconsistent
statements than a direct judge (recall \ccTAuditR{} versus \ccTDirR), and its
grades expose where expert consistency labels are themselves unsettled.
\fi
\end{abstract}

\section{Introduction}
\label{sec:intro}

Formal proof assistants such as Lean provide machine-checkable guarantees for
mathematical arguments, but formalization still requires substantial human
effort to translate informal statements, identify intermediate lemmas, locate
relevant library results, and construct and repair proofs. Recent advances in
large language models (LLMs) are changing this workflow. LLM-based systems can
generate mathematical arguments, translate informal statements into formal
specifications, construct proof-assistant code, and refine proofs using
compiler feedback
\citep{wu2022autoformalization,yang2023leandojo,jiang2023dsp,
xin2025deepseekproverv15,an2026qed}. As these capabilities improve, automated
formalization is expanding from isolated theorems toward research papers,
textbooks, and other collections of interdependent mathematical
results~\citep{mathinc2025gauss,achim2025aristotle,zhang2026leanmarathon,
gloeckle2026textbook,rammal2026autoformbot,wang2026m2f}.

Scaling from a single theorem to a mathematical document changes the problem
substantially. A system must identify the theorem units in the source, recover
their dependencies, determine a valid formalization order, reuse previously
established results, and maintain a coherent formal project. These requirements
naturally motivate agentic workflows in which different components handle
specification, reasoning, planning, proving, feedback, and reuse. Such
workflows extend proof-search capability, but they also make intermediate
formalization decisions increasingly consequential: once a generated theorem
is accepted as a dependency, later results may build on it automatically.

Semantic faithfulness is a central concern. A proof assistant checks
the theorem, not whether it faithfully represents the source. A
formal statement may compile and admit a proof while omitting a
hypothesis, changing a quantified domain, or replacing a source object with a
different formal construction. At the document level, such mismatches can propagate
through downstream dependencies while the project remains formally verified.
We therefore distinguish \emph{proof capability}, whether a proof can be found;
\emph{semantic faithfulness}, whether the formal statement preserves the source
meaning; and \emph{formal correctness}, whether the artifacts satisfy
machine-checkable requirements.

Existing work evaluates informal--formal alignment through bidirectional
provability, learned or prompted judgments, structural comparisons, expert
references, and targeted consistency checks
\citep{zhang2026beyondcompilation,liu2025beq,han2026shadowbench,
ammanamanchi2026faults}. These methods provide useful theorem-level signals,
but formalizing an entire document also requires turning this evidence
into document-level decisions: whether a generated statement may advance,
whether a deviation from the source is justified, and what evidence is
required before dependent theorems can safely build on it or the project can
be considered complete.

We introduce \Fyan, a human--AI harness for document-level mathematical
formalization. \Fyan decomposes a source into dependency-aware theorem units,
formalizes them in dependency order, and allows later units to reuse proved
upstream results. It places a semantic audit between statement generation and
proof search, while allowing models and human experts to guide the process
without directly deciding whether a theorem may advance. Our contributions are:

\begin{enumerate}[nosep,leftmargin=*]

  \item \textbf{A dependency-aware workflow for document-level formalization.}
        \Fyan organizes theorem units in a document-level dependency DAG and
        formalizes them in topological order, so proved upstream modules can be
        reused by downstream theorems (Figure~\ref{fig:pipeline}). Components
        exchange persistent project artifacts rather than relying on transient
        agent context, while the \Orchestrator{} tracks dependencies and project
        progress. Users and domain experts can inspect the project and provide
        guidance without directly editing protected statements or forcing stage
        transitions.

  \item \textbf{An evidence-grounded semantic audit with one standard of equivalence.}
        Before proof search proceeds, \Fyan compares the candidate Lean
        statement with its source and records local correspondences, omissions,
        and semantic differences (Figure~\ref{fig:semantic}). An LLM proposes
        the comparison evidence, while deterministic code checks coverage,
        scope, polarity, and admissible transfer directions and computes the
        grade. Only statements whose agreement with the source is evident are
        accepted directly; the rest return for repair or, if admitted, carry a
        bridge obligation that must be discharged in Lean.
\iffalse
  \item \textbf{Deterministic gates for stage advancement and completion.}
        A theorem advances only when the required artifacts and checks are
        current; an agent cannot advance the pipeline by simply reporting
        success. Final completion requires a successful build, no admissions,
        discharged bridge obligations, intact protected signatures, current
        certifications, and an approved transitive axiom audit.
\fi
 \item \textbf{Empirical evaluation across proving, faithfulness, and document-level formalization.}
With the model, toolchain, and graders fixed, \Fyan{} proves \ftfpFyan{} of
\Ntcs{} FormalTCS theorems under a strict Lean check against \ftfpCtl{} for a
general agent harness and raises the natural-language proof score from
\tnpCtl{} to \tnpFyan{}. Its
semantic audit catches more inconsistent statements on ConsistencyCheck than a
direct LLM judge (recall \ccAdjAuditR{} vs.\ \ccAdjDirR{} on verified labels and
\ccTAuditR{} vs.\ \ccTDirR{} on the original labels). \Fyan{} also built
ODENumLib, \odeDone{} dependency-linked theorem units and \odeLines{} lines of
Lean, including error bounds for $\theta$-methods and the consistency
conditions of linear multistep methods.
\end{enumerate}

\begin{figure}[t]
\centering
\definecolor{fyBlue}{HTML}{2F6DB5}
\definecolor{fyBlueFill}{HTML}{EAF1FB}
\definecolor{fyBlueStep}{HTML}{D3E3F7}
\definecolor{fyGreen}{HTML}{2E8B57}
\definecolor{fyGreenFill}{HTML}{DDF0E4}
\definecolor{fyRed}{HTML}{C0392B}
\definecolor{fyOrange}{HTML}{C96522}
\definecolor{fyOrangeFill}{HTML}{FFF2E5}
\definecolor{fyInk}{HTML}{2B2B2B}
\definecolor{fyInkSoft}{HTML}{4B5563}
\definecolor{fyInkMute}{HTML}{6B7280}
\definecolor{fyLine}{HTML}{9CA3AF}
\definecolor{fyGray}{HTML}{F3F4F6}
\begin{tikzpicture}[x=1mm,y=1mm,
  font=\sffamily\scriptsize, text=fyInk,
  box/.style={rounded corners=1.2pt, line width=0.6pt, align=center,
    inner sep=0pt, anchor=north west},
  model/.style={box, draw=fyBlue, fill=fyBlueFill},
  gate/.style={box, draw=fyGreen, fill=fyGreenFill, line width=0.8pt},
  neutral/.style={box, draw=fyLine, fill=fyGray},
  substep/.style={box, draw=fyBlue, fill=fyBlueStep},
  arr/.style={-{Latex[length=1.6mm]}, line width=0.7pt, draw=fyInkSoft},
  rej/.style={-{Latex[length=1.6mm]}, line width=0.7pt, draw=fyRed},
  adv/.style={-{Latex[length=1.6mm]}, line width=0.7pt, draw=fyInkSoft,
    dash pattern=on 1.6pt off 1.1pt},
  sup/.style={-{Latex[length=1.6mm]}, line width=0.7pt, draw=fyOrange,
    dash pattern=on 1.6pt off 1.1pt},
  lbl/.style={font=\sffamily\scriptsize, text=fyInkSoft, inner sep=0pt},
]
\def\fyT#1{{\footnotesize\bfseries\strut #1}}% local helpers (scoped to this figure)
\def\fyS#1{{\bfseries\strut #1}}
\def\fyF#1{{\ttfamily #1}}
% ===================== document level (once per document) =====================
\node[neutral, minimum width=24.4mm, minimum height=10.6mm] (src) at (0,0)
  {\fyT{Sources}\\[0.4mm]\strut PDF / TeX /\\\strut Markdown};
\node[model, minimum width=24.4mm, minimum height=10.6mm] (intake) at (28.15,0)
  {\fyT{Intake}\\[0.4mm]\strut extracts\\\strut statements};
\node[model, minimum width=24.4mm, minimum height=10.6mm] (spec) at (56.3,0)
  {\fyT{Specifier}\\[0.4mm]\strut self-contained units\\\strut \fyF{problem.tex}};
\begin{scope}[shift={(96.65,-5.3)}]
  \coordinate (ga) at (-7.2,0);    \coordinate (gb) at (-3,2.1);
  \coordinate (gc) at (0,-2.1);    \coordinate (gd) at (3,2.1);
  \coordinate (ge) at (7.2,0);
  \draw[fyInkSoft, line width=0.5pt] (ga)--(gb) (ga)--(gc) (gb)--(gd)
    (gc)--(gd) (gc)--(ge) (gd)--(ge);
  \foreach \nd in {ga,gb,gc,gd,ge}
    \filldraw[draw=fyInkSoft, fill=fyGray, line width=0.5pt] (\nd) circle (0.85mm);
\end{scope}
\node[lbl, anchor=west, align=left] at (105.6,-5.3)
  {Document-level DAG\\ over theorem units};
\draw[arr] (src.east) -- (intake.west);
\draw[arr] (intake.east) -- (spec.west);
\draw[arr] (spec.east) -- ($(ga)+(-0.95,0)$);
% ===================== Orchestrator =====================
\node[box, draw=fyOrange, fill=fyOrangeFill, minimum width=139.4mm,
  minimum height=4.8mm] (orch) at (0,-13)
  {\fyT{Orchestrator}\quad schedules ready units based on theorem dependencies;
   regularly checks ongoing proofs};
\draw[arr] ($(gc)+(0,-0.95)$) -- (96.65,-13);
% ===================== per theorem unit =====================
\node[model, minimum width=17mm, minimum height=16mm] (rsn) at (0,-24.3)
  {\fyT{Reasoner}\\[0.4mm]\strut informal proof\\\strut \fyF{proof.md}};
\node[model, minimum width=17mm, minimum height=16mm] (pln) at (20.5,-24.3)
  {\fyT{Planner}\\[0.2mm]\strut lemma DAG,\\\strut Mathlib scout,\\\strut fills logic gaps};
% Formalizer: one component, two steps, with the audit gate between them
\node[model, minimum width=69.5mm, minimum height=16mm] (fml) at (41,-24.3) {};
\node[anchor=north west, inner sep=0pt] at (42.6,-25.2) {\fyT{Formalizer}};
\node[substep, minimum width=18.5mm, minimum height=8.5mm] (af) at (43,-28.8)
  {\fyS{autoformalize}\\ statement $S_1$};
\node[gate, minimum width=21mm, minimum height=8.5mm] (sa) at (64.5,-28.8)
  {\fyS{Semantic audit}\\ {\rmfamily\scshape Fyan} Judge};
\node[substep, minimum width=20.5mm, minimum height=8.5mm] (pr) at (88.5,-28.8)
  {\fyS{prove}\\ plan--prove--review};
\node[gate, minimum width=25.4mm, minimum height=16mm] (cg) at (114,-24.3)
  {\fyT{Completion gate}\\[0.4mm]\strut no \fyF{sorry}, full build,\\\strut bridges, axiom audit};
\draw[arr] (rsn) -- (pln);
\draw[arr] (pln.east) -- (fml.west|-af.west);
\draw[arr] (af) -- (sa);
\draw[arr] (sa) -- (pr);
\draw[arr] (pr.east) -- (cg.west|-pr.east);
% dispatch from the Orchestrator
\draw[arr] (8.5,-17.8) -- (8.5,-24.3);
\node[lbl, anchor=west] at (10,-21.05) {per theorem unit, based on theorem dependencies};
% scheduled supervision (advice only)
\draw[sup] (92,-17.8) -- (92,-28.8);
% verified module is reused downstream
\draw[arr] (134,-24.3) -- (134,-17.8);
\node[lbl, anchor=east, align=right] at (132.6,-21.05)
  {reused downstream};
% repair loops
\draw[rej] (75,-37.3) -- (75,-42.3) -- (52.25,-42.3) -- (52.25,-37.3);
\node[lbl, text=fyRed, anchor=north] at (63.6,-42.8) {grade $<2$: repair};
\draw[rej] (119.5,-40.3) -- (119.5,-42.3) -- (102,-42.3) -- (102,-37.3);
\node[lbl, text=fyRed, anchor=north] at (110.75,-42.8) {fail: resume proving};
% ===================== advice and reuse =====================
\node[neutral, dash pattern=on 2pt off 1.4pt,
  minimum width=37.5mm, minimum height=9mm] (hum) at (0,-47.5)
  {\fyS{Human expert}\\ provides advice via Dashboard};
\draw[adv] (8.5,-47.5) -- (8.5,-40.3);
\draw[adv] (37.5,-52) -- (44.5,-52) -- (44.5,-40.3);
\node[model, minimum width=28mm, minimum height=9mm] (cur) at (76,-47.5)
  {\fyT{Curator}\\[0.4mm]\strut skills from verified runs};
\draw[arr] (82,-40.3) -- (82,-47.5);
\draw[adv] (90,-47.5) -- (90,-40.3);
\node[model, minimum width=27.4mm, minimum height=9mm] (aud) at (112,-47.5)
  {\fyT{Auditor}\\[0.4mm]\strut report, \fyF{review.md}};
\draw[arr] (134,-40.3) -- (134,-47.5);
% ===================== legend =====================
\begin{scope}[shift={(0,-61)}]
  \filldraw[draw=fyBlue, fill=fyBlueFill, line width=0.6pt, rounded corners=0.8pt]
    (0,-1.1) rectangle ++(3.4,2.2);
  \node[lbl, anchor=west] (l1) at (4.3,0) {LLM agent};
  \filldraw[draw=fyGreen, fill=fyGreenFill, line width=0.8pt, rounded corners=0.8pt]
    ($(l1.east)+(2.0,-1.1)$) rectangle ++(3.4,2.2);
  \node[lbl, anchor=west] (l2) at ($(l1.east)+(6.3,0)$) {deterministic gate};
  \draw[sup] ($(l2.east)+(2.0,0)$) -- ++(5,0);
  \node[lbl, anchor=west] (l4) at ($(l2.east)+(7.9,0)$) {supervision};
  \draw[rej] ($(l4.east)+(2.0,0)$) -- ++(5,0);
  \node[lbl, anchor=west] (l5) at ($(l4.east)+(7.9,0)$) {repair};
  \draw[adv] ($(l5.east)+(2.0,0)$) -- ++(5,0);
  \node[lbl, anchor=west] (l6) at ($(l5.east)+(7.9,0)$) {advice / reuse};
\end{scope}
\end{tikzpicture}
\caption{\textbf{The \Fyan workflow for document-level formalization.}
\Specifier turns the source into theorem units and a document-level DAG; the
\Orchestrator{} schedules ready units based on theorem dependencies and regularly
checks ongoing proofs. Within each unit, models propose and deterministic gates decide: the
semantic audit (Figure~\ref{fig:semantic}) freezes the statement produced by
\Formalizer or returns it for repair, and the completion gate accepts the proof,
whose module is then reused downstream. Human guidance and \Curator skills are
advice only; \Auditor writes a report for human inspection.}
\label{fig:pipeline}
\end{figure}

\newtheorem{proposition}{Proposition}

\section{Method}
\label{sec:method}

\label{sec:decouple}
\label{sec:chain}

\Fyan\ maps PDF/TeX/Markdown sources into a dependency-aware Lean project with
provenance, semantic-review, proof, and completion records. It combines
\Specifier, \Reasoner, \Planner, \Formalizer, \Curator, and
\Auditor. Components communicate through persistent artifacts and explicit
control boundaries: models and experts may reason and guide, while validated
evidence determines stage advancement and acceptance. Details appear in
Appendix~\ref{sec:overview}.

\subsection{Workflow and Control Boundaries}
\label{sec:method:overview}
\label{sec:method:workflow}
\label{sec:method:artifacts}
\label{sec:chain:pipeline}

Figure~\ref{fig:pipeline} shows the end-to-end workflow. Intake preserves file
and page provenance, and \Specifier\ converts the reviewed source into
self-contained \code{problem.tex} units and a \emph{document-level dependency
DAG}. The \Orchestrator{} follows this DAG in topological order, starting a
theorem only after its dependencies are complete. Proved upstream units then
become importable Lean modules for downstream theorems, avoiding unnecessary
re-proving and enabling reuse across the project.

For each ready unit, \Reasoner, built on the open-source \QED{}
system~\citep{an2026qed}, develops a natural-language proof while respecting
the document-level dependency structure. The resulting \code{proof.md}
preserves source provenance and may reuse previously proved upstream units.
\Planner\ then translates this argument into a \emph{lemma-level DAG}, a
formalization plan, and a compiling \code{Statement.lean} scaffold. The
document-level DAG determines \emph{which theorem is formalized next}, whereas
the lemma-level DAG organizes \emph{how it is proved}. These artifacts
are kept read-only during proof search, preventing the prover from silently
altering the task it is meant to solve.

\Formalizer\ constructs Lean proofs through a plan--prove--review loop in
a Pi-based coding-agent runtime~\citep{pi-agent}. It implements the scaffold
in \code{Basic.lean}, may introduce helper lemmas and revise proof strategies, and can directly use
completed upstream modules. \Curator\ then mines verified runs for reusable
proof patterns, useful failures, and search strategies; these skills may guide
later runs but do not count as proof evidence. \Auditor\ combines deterministic
scans with a model-written \code{review.md} for human inspection.

Through the Dashboard, users can inspect progress, review blockers, and inject
mathematical or strategic guidance during a run; the \Orchestrator{} regularly
cross-checks ongoing proofs against the argument, specification, and plan, and
may activate critics or escalate questions. Such guidance is advice-only: it
cannot edit executable Lean, modify a protected statement, or force a stage
transition.

Across the workflow, persisted artifacts rather than transient agent
conversation carry state, and a gateway is the sole writer of stage state.
This separates \emph{capability} from \emph{acceptance}: reasoners, provers,
retrieved skills, \Orchestrator{} supervision, and human experts may all help find a proof,
but validated evidence alone determines what counts as a faithful and completed
formalization.
The next two subsections describe the semantic audit and the proof-transfer
and bridge obligations.

\subsection{Evidence-Grounded Semantic Auditing}
\label{sec:link1}
\begin{figure}[t]
\centering
\definecolor{fyBlue}{HTML}{2F6DB5}
\definecolor{fyBlueDark}{HTML}{1F4E8C}
\definecolor{fyBlueFill}{HTML}{EAF1FB}
\definecolor{fyBlueLine}{HTML}{9DBBE3}
\definecolor{fyGreen}{HTML}{2E8B57}
\definecolor{fyGreenDark}{HTML}{1E6B40}
\definecolor{fyGreenFill}{HTML}{E8F5EC}
\definecolor{fyGreenLine}{HTML}{9CCFB0}
\definecolor{fyRed}{HTML}{C0392B}
\definecolor{fyRedFill}{HTML}{FBEDEB}
\definecolor{fyInk}{HTML}{2B2B2B}
\definecolor{fyInkSoft}{HTML}{4B5563}
\definecolor{fyGray}{HTML}{9CA3AF}
\definecolor{fyGrayFill}{HTML}{F3F4F6}
\begin{tikzpicture}[x=1mm,y=1mm,
  font=\sffamily\scriptsize, text=fyInk,
  panel/.style={line width=0.6pt, rounded corners=1.2pt},
  bx/.style={line width=0.6pt, rounded corners=1.2pt},
  hd/.style={anchor=base west, inner sep=0pt, font=\sffamily\normalsize\bfseries},
  sub/.style={anchor=base west, inner sep=0pt, font=\sffamily\scriptsize\itshape},
  ttl/.style={anchor=base west, inner sep=0pt, font=\sffamily\footnotesize\bfseries\boldmath},
  lbl/.style={anchor=base west, inner sep=0pt},
  rlbl/.style={anchor=base east, inner sep=0pt},
  tag/.style={anchor=base east, inner sep=0pt, text=fyBlue, font=\sffamily\footnotesize},
  note/.style={anchor=base west, inner sep=0pt, font=\sffamily\scriptsize\itshape, text=fyInkSoft},
  qual/.style={anchor=base east, inner sep=0pt, font=\sffamily\scriptsize\itshape, text=fyInkSoft},
  chip/.style={anchor=base, rounded corners=0.8pt, minimum width=3mm, minimum height=2.7mm,
    inner sep=0pt, font=\sffamily\scriptsize\bfseries, text=white},
  ck/.style={draw=fyGreen, line width=0.7pt, line cap=round, line join=round},
  wire/.style={line width=0.7pt, draw=fyInkSoft},
  flow/.style={-{Latex[length=1.6mm]}, line width=0.7pt, draw=fyInkSoft},
  rep/.style={-{Latex[length=1.6mm]}, line width=0.7pt, draw=fyRed, rounded corners=1.2pt},
]
\newcommand{\fyto}{\,{\rightarrow}\,}
% ------------------------------------------------------------ layout (mm)
% columns: inputs 0-18.4 | judge panel 21.7-68.7 | validator panel 71.3-102.7 | bus 104.9 | outcome 106.9-139.6
% rows: header base 52.0, subtitle base 48.9, content 47.5 -> 7.7; main flow at y = 31.9
% ============================================================ panels
\draw[panel, draw=fyBlue,  fill=fyBlueFill]  (21.7,7.7) rectangle (68.7,55.8);
\draw[panel, draw=fyGreen, fill=fyGreenFill] (71.3,7.7) rectangle (102.7,55.8);
% ============================================================ column headers
\node[hd]                    at (0,52.0)     {Inputs};
\node[sub, text=fyInkSoft]   at (0,48.9)     {per theorem unit};
\node[hd, text=fyBlueDark]   at (22.9,52.0)  {{\rmfamily\scshape Fyan} Judge};
\node[sub, text=fyBlue]      at (22.9,48.9)  {proposes evidence};
\node[hd, text=fyGreenDark]  at (72.5,52.0)  {Validator};
\node[sub, text=fyGreen]     at (72.5,48.9)  {deterministic code decides};
\node[hd]                    at (106.9,52.0) {Outcome};
\node[sub, text=fyInkSoft]   at (106.9,48.9) {grade vs.\ threshold};
% ============================================================ inputs
\draw[bx, draw=fyGray, fill=fyGrayFill] (0,47.5) rectangle (18.4,39.9);
\node[ttl] at (1.1,44.4) {Source $N$};
\node[lbl] at (1.1,41.2) {\texttt{problem.tex}};
\draw[bx, draw=fyGray, fill=fyGrayFill] (0,37.2) rectangle (18.4,26.6);
\node[ttl] at (1.1,34.1) {Lean $S_1$};
\node[lbl] at (1.1,30.9) {candidate in};
\node[lbl] at (1.1,27.9) {library context};
% ============================================================ LLM Judge
% J1: source tree first (was: Lean-first inventory, S_1 ~> N)
\draw[bx, draw=fyBlueLine, fill=white] (22.9,47.5) rectangle (67.5,36.9);
\node[ttl] at (23.9,44.4) {Source tree};
\node[tag] at (66.5,44.4) {before Lean};
\node[lbl] at (23.9,41.2) {formula tree of $N$, literal quotes};
\node[lbl] at (23.9,38.2) {checked by an independent review};
% J2: slot pairing (was: reverse coverage, N ~> S_1)
\draw[bx, draw=fyBlueLine, fill=white] (22.9,35.7) rectangle (67.5,28.1);
\node[ttl] at (23.9,32.6) {Slot pairing};
\node[tag] at (66.5,32.6) {$N{\leftrightarrow}S_1$};
\node[lbl] at (23.9,29.4) {slots cut by code; unpaired = omission};
% J3: local relations
\draw[bx, draw=fyBlueLine, fill=white] (22.9,26.9) rectangle (67.5,9.2);
\node[ttl] at (23.9,23.8) {Local relation, grade $s_i$};
\foreach \k/\lev/\name/\gloss in {
    0/3/same/exact match,
    1/2/trivial\_equivalent/immediate,
    2/1/mutated\_*/justified change,
    3/0/unaligned/unresolved mismatch} {
  \node[chip, fill=fyBlue] at (25.4,20.5-3.1*\k) {\lev};
  \node[lbl] at (27.7,20.5-3.1*\k) {\texttt{\name}};
  \node[rlbl, text=fyInkSoft] at (66.5,20.5-3.1*\k) {\gloss};
}
% ============================================================ Validator
\draw[bx, draw=fyGreenLine, fill=white] (72.5,47.5) rectangle (101.5,25.5);
\foreach \k/\txt in {
    0/two-way coverage,
    1/direction vs.\ polarity,
    4/{scope, dependencies},
    5/evidence levels,
    6/required checks} {
  \node[lbl] at (76.1,44.7-3.0*\k) {\txt};
  \draw[ck] (73.6,44.7-3.0*\k+0.75) -- ++(0.5,-0.55) -- ++(0.95,1.1);
}
\node[lbl, text=fyInkSoft] at (78.6,44.7-3.0*2) {positive: $P_L\fyto P_S$};
\node[lbl, text=fyInkSoft] at (78.6,44.7-3.0*3) {negative: $P_S\fyto P_L$};
% grade box, centred on the gap between the two lower outcomes (y = 20.3)
\draw[bx, draw=fyGreen, fill=white] (72.5,24.3) rectangle (101.5,16.3);
\node[anchor=base, inner sep=0pt, font=\sffamily\footnotesize\bfseries] at (87.0,21.0)
  {grade $=\min_i s_i$};
\node[anchor=base, inner sep=0pt] at (87.0,17.8) {non-compensatory};
% form repair: the only way back to the Judge
\draw[flow, dashed, draw=fyGray] (73.2,11.9) -- (67.5,11.9);
\node[note] at (74.2,12.9) {malformed record:};
\node[note] at (74.2,10.0) {form repair only};
% ============================================================ Outcome
% O1 accept
\draw[bx, draw=fyGreen, fill=fyGreenFill] (106.9,47.5) rectangle (139.6,36.9);
\node[ttl] at (108.1,44.4) {Accept, freeze $S_1$};
\node[chip, fill=fyGreen] at (109.6,41.2) {3};
\node[lbl] at (111.8,41.2) {\texttt{aligned}};
\node[chip, fill=fyGreen] at (109.6,38.2) {2};
\node[lbl] at (111.8,38.2) {\texttt{aligned\_trivial}};
% default threshold
\draw[line width=0.6pt, draw=fyGray, dash pattern=on 1.6pt off 1.2pt] (106.9,34.9) -- (139.6,34.9);
\node[inner xsep=0.8mm, inner ysep=0.2mm, fill=white, font=\sffamily\scriptsize\itshape,
      text=fyInkSoft] at (123.25,34.9) {default threshold};
% O2 opt-in bridge
\draw[bx, draw=fyGreen, fill=white, dash pattern=on 2pt off 1.2pt] (106.9,32.9) rectangle (139.6,22.3);
\node[ttl] at (108.1,29.8) {Admit, freeze $S_1$};
\node[chip, fill=fyGreen] at (109.6,26.6) {1};
\node[lbl] at (111.8,26.6) {\texttt{modified}};
\node[qual] at (138.4,26.6) {opt-in};
\node[lbl] at (108.1,23.6) {bridge obligation $S_1\fyto S_0$};
% O3 reject
\draw[bx, draw=fyRed, fill=fyRedFill] (106.9,18.3) rectangle (139.6,7.7);
\node[ttl, text=fyRed] at (108.1,15.2) {Reject, repair};
\node[chip, fill=fyRed] at (109.6,12.0) {0};
\node[lbl] at (111.8,12.0) {\texttt{failed}};
\node[chip, fill=fyRed] at (109.6,9.0) {1};
\node[lbl] at (111.8,9.0) {\texttt{modified}};
\node[qual] at (138.4,9.0) {default};
% ============================================================ arrows
% inputs -> judge (merged) and judge -> validator: main flow at y = 31.9
\draw[wire] (18.4,43.7) -- (19.9,43.7) -- (19.9,31.9);
\draw[flow] (18.4,31.9) -- (21.7,31.9);
\fill[fyInkSoft] (19.9,31.9) circle (0.5);
\draw[flow] (68.7,31.9) -- (71.3,31.9);
% grade -> outcome bus
\draw[wire] (101.5,20.3) -- (104.9,20.3);
\draw[wire] (104.9,42.2) -- (104.9,13.0);
\fill[fyInkSoft] (104.9,20.3) circle (0.5);
\draw[flow] (104.9,42.2) -- (106.9,42.2);
\draw[flow] (104.9,27.6) -- (106.9,27.6);
\draw[flow] (104.9,13.0) -- (106.9,13.0);
% repair loop back to the autoformalizer
\draw[rep] (123.25,7.7) -- (123.25,5.0) -- (9.2,5.0) -- (9.2,26.6);
\node[fill=white, inner xsep=0.8mm, inner ysep=0.4mm, text=fyInkSoft] at (9.2,16.0) {Formalizer};
\node[fill=white, inner xsep=1mm, inner ysep=0.3mm, text=fyRed] at (66.6,5.0)
  {localized repair request: item at fault, direction of change, suggested fix};
% ============================================================ legend
\newcommand{\fySw}[2]{\tikz[baseline=-0.6ex]\draw[line width=0.6pt, rounded corners=1pt,
  draw=#1, fill=#2] (0,-1) rectangle (3.2,1);}
\node[anchor=base, inner sep=0pt] at (69.8,0.3) {%
  \fySw{fyBlue}{fyBlueFill}~LLM proposes\hspace{4mm}%
  \fySw{fyGreen}{fyGreenFill}~deterministic code decides\hspace{4mm}%
  \tikz[baseline=-0.6ex]\draw[flow] (0,0) -- (6,0);~data flow\hspace{4mm}%
  \tikz[baseline=-0.6ex]\draw[rep] (0,0) -- (6,0);~repair (below threshold)};
\end{tikzpicture}
\caption{\textbf{The \Fyan{} semantic audit.} The \Fyan{} Judge, an LLM, first extracts a
quoted, independently reviewed formula tree of $N$, then pairs the slots
(objects, hypotheses, claims) that code cuts from it and from $S_1$, with a
grade $s_i$ per paired group. Code derives polarity, checks the record, and routes $S_1$ by
the statement grade $\min_i s_i$: at or above the threshold (default 2) it is frozen for the
document-level project, else repaired; at threshold 1, grade 1 is admitted
with a Lean bridge obligation to a source-facing $S_0$.}
\label{fig:semantic}
\end{figure}

A formal statement may compile, and even be equivalent to its source as a
whole, while altering quantified objects, assumptions, or dependencies.
\Fyan{} therefore treats semantic faithfulness as an \emph{auditable relation}
between source and formal statement.
\iffalse
Two formal statements may be mathematically equivalent, and both may even be
provable, while only one faithfully represents the original problem. An
equivalent reformulation can change the quantified objects, alter assumptions
or witness dependencies, or encode a different mathematical construction.
Fyan therefore treats semantic faithfulness as an \emph{auditable relation}
between the source and the formal statement, rather than as successful
compilation or whole-statement equivalence. The audit combines local structural
correspondence with logical evidence for any non-identical change.
\fi

\paragraph{Structured semantic record.}
The \Fyan{} Judge compares the source $N$ with the candidate Lean statement $S_1$
in its actual library environment, starting from the source. Before any Lean
is shown, one call extracts a formula tree of $N$ whose definitions and atoms
are anchored by literal source quotations, and an independent call reviews it.
The host reads the Lean tree from the elaborated declaration, with the
definitions it reaches, and cuts both trees into the same \emph{slots}: binder
domains (\emph{objects}), hypothesis conjuncts (\emph{hypotheses}), goal
conjuncts (\emph{claims}), and used definitions, each with host-computed scope
and polarity. The \Fyan{} Judge fills one row per group of paired slots with its
semantic relation and comparison evidence. Restrictions carried by types,
subtypes, instances, or implicit parameters are treated as mathematical
content.

Because both sides are cut into slots, source content that has disappeared
entirely remains visible: it leaves a source slot that no row pairs, which the
host scores as an omission. For example, if the source assumes both $x>0$ and
$x<1$, while the Lean statement contains only $x>0$, the remaining condition is
paired and the unpaired slot for $x<1$ is recorded as an omitted hypothesis.

\paragraph{Local semantic relations.}
Once the slots have been paired, \Fyan{} classifies each
local correspondence into one of the four relation families of
Table~\ref{tab:semantic-relations}.

\begin{table}[t]
\centering
\normalsize
\caption{Local semantic relations. The \texttt{mutated\_*} family includes
equivalent, stronger, and weaker changes; one-way changes are admissible only
in the appropriate logical polarity.}
\label{tab:semantic-relations}
\begin{tabular}{@{}p{0.28\linewidth}p{0.55\linewidth}c@{}}
\toprule
Relation & Interpretation & Grade \\
\midrule
\texttt{same}
& Same mathematical content and structure
& 3 \\

\texttt{trivial\_equivalent}
& Immediate equivalence after notation or checked definitions
& 2 \\

\texttt{mutated\_*}
& Substantive change supported by a bounded equivalence or directional proof
& 1 \\

\texttt{unaligned}
& Unsupported, mismatched, unresolved, or requiring a complex comparison
& 0 \\
\bottomrule
\end{tabular}
\end{table}
\iffalse
The direction of a substantive change depends on where it occurs in the
complete source formula. Let $P_S$ and $P_L$ be corresponding source and Lean
predicates. Fyan propagates polarity from the statement root: conjunction,
disjunction, and fixed-domain quantifiers preserve polarity, whereas negation
and implication antecedents reverse it. The admissible one-way directions are
\[
\text{positive: } P_L \rightarrow P_S,
\qquad
\text{negative: } P_S \rightarrow P_L.
\]
Thus a candidate conclusion may be stronger, whereas a candidate assumption
may be weaker. For instance,
\[
(A\land H)\rightarrow G
\quad\longrightarrow\quad
A\rightarrow G
\]
removes a conjunct from a negatively occurring assumption and still permits
recovery of the source theorem. In contrast,
\[
A\rightarrow(G\land H)
\quad\longrightarrow\quad
A\rightarrow G
\]
drops part of a positive conclusion and is not recoverable in general.
Omissions are handled by this same polarity principle rather than by a
separate heuristic for ``missing assumptions'' or ``missing goals''.
\fi
The admissibility of a substantive change depends on its polarity in the
complete source formula. For corresponding source and Lean predicates $P_S$
and $P_L$, conjunction, disjunction, and fixed-domain quantifiers preserve
polarity, while negation and implication antecedents reverse it. Hence the
admissible one-way directions are
\[
\text{positive: } P_L \rightarrow P_S,
\qquad
\text{negative: } P_S \rightarrow P_L.
\]
Thus candidate conclusions may be stronger, while candidate assumptions may
be weaker. For example, replacing $(A\land H)\rightarrow G$ by
$A\rightarrow G$ is recoverable, whereas replacing
$A\rightarrow(G\land H)$ by $A\rightarrow G$ is not in general.
Omissions are handled by the same polarity principle.

\iffalse
\paragraph{Deterministic judgment.}
The language model produces the structured evidence record but does not choose
the final judgment. A deterministic validator checks source coverage,
relation admissibility, scope and dependency consistency, evidence thresholds,
and the required semantic checks. If the valid record contributes
$s_i\in\{0,1,2,3\}$, Fyan computes
\[
\operatorname{Score}(R)
=
\min\bigl(\{3\}\cup\{s_i:i\in I(R)\}\bigr),
\]
yielding
\[
3:\texttt{aligned},\qquad
2:\texttt{aligned\_trivial},\qquad
1:\texttt{modified},\qquad
0:\texttt{failed}.
\]
The minimum is deliberately non-compensatory: one unresolved semantic defect
cannot be hidden by many correctly aligned items, and unknown or overly
complex comparisons fail rather than being replaced by a model confidence
score. A \texttt{modified} judgment alone is not sufficient for acceptance;
Next we show how admissible local changes induce explicit
proof-transfer obligations and, when necessary, a Lean-checkable bridge back
to the source specification.
\fi
\paragraph{Deterministic judgment.}
The language model fills the structured evidence record but does not choose
the verdict. A deterministic validator checks coverage, relation admissibility,
scope and dependencies, evidence thresholds, and required semantic checks.
Each valid row receives a grade $s_i\in\{0,1,2,3\}$, and \Fyan{} computes
the statement grade
\[
\operatorname{grade}(R)
=
\min\bigl(\{3\}\cup\{s_i:i\in I(R)\}\bigr),
\]
corresponding to $3$ (\texttt{aligned}), $2$ (\texttt{aligned\_trivial}),
$1$ (\texttt{modified}), and $0$ (\texttt{failed}). The minimum is
non-compensatory: a single unresolved, unknown, or overly complex comparison
cannot be offset by well-aligned items. A \texttt{modified} verdict records a substantive but locally admissible
change; Section~\ref{sec:transfer} shows how \Fyan{} verifies its whole-statement
recoverability through explicit proof-transfer obligations and, when needed,
a Lean-checkable bridge.

\subsection{Proof Transfer and Bridge Obligations}
\label{sec:transfer}
\label{sec:method:audit-policy}
\label{sec:method:recovery}
\label{sec:method:acceptance}
\label{sec:link2}

A directionally admissible change is not thereby faithful. \Fyan{} instead
asks whether a proof of the candidate statement can be transferred back to a
Lean rendering of the source specification.

\iffalse
\paragraph{Global proof transfer.}
Let
\[
S_0=\forall x:X,\; A_0(x)\rightarrow G_0(x)
\]
be a Lean rendering of the source specification, and let
\[
S_1=\forall y:Y,\; A_1(y)\rightarrow G_1(y)
\]
be the audited candidate statement. A recoverable modification must supply
three pieces of evidence:
\[
e:X\rightarrow Y,
\]
which represents each admissible source object in the candidate domain,
together with
\[
a:\forall x:X,\; A_0(x)\rightarrow A_1(e(x))
\]
and
\[
g:\forall x:X,\;
A_0(x)\rightarrow G_1(e(x))\rightarrow G_0(x).
\]
The map $a$ shows that the source assumptions provide everything needed to
apply the candidate theorem, while $g$ shows that the candidate conclusion is
sufficient to recover the source conclusion. Hence every proof $p:S_1$
induces
\begin{equation}
\label{eq:transfer-main}
\kappa(p)(x)(h)
=
g\,x\,h\bigl(p\,(e(x))\,(a\,x\,h)\bigr),
\end{equation}
and therefore a canonical proof transformer
$\kappa:S_1\rightarrow S_0$.
These three requirements correspond to the
\texttt{object\_coverage}, \texttt{hypothesis\_transfer}, and
\texttt{claim\_transfer} obligations recorded by the audit.
\fi
\paragraph{Global proof transfer.}
Let
\[
S_0:=\forall x:X,\;A_0(x)\rightarrow G_0(x),
\qquad
S_1:=\forall y:Y,\;A_1(y)\rightarrow G_1(y)
\]
denote the source-facing Lean statement and the audited candidate,
respectively. A recoverable modification supplies
\[
e:X\rightarrow Y,\qquad
a:\forall x:X,\;A_0(x)\rightarrow A_1(e(x)),\qquad
g:\forall x:X,\;A_0(x)\rightarrow G_1(e(x))\rightarrow G_0(x).
\]
Here $e$ maps each admissible source object into the candidate domain,
$a$ provides the assumptions needed to apply the candidate theorem, and
$g$ recovers the source conclusion from the candidate conclusion. Hence every
proof $p:S_1$ induces the canonical transfer
\begin{equation}
\label{eq:transfer-main}
\kappa:S_1\rightarrow S_0,\qquad
\kappa(p)(x)(h)
=
g\,x\,h\bigl(p\,(e(x))\,(a\,x\,h)\bigr).
\end{equation}
These three requirements correspond to the
\texttt{object\_coverage}, \texttt{hypothesis\_transfer}, and
\texttt{claim\_transfer} obligations recorded by the audit.

\paragraph{From local evidence to global transfer.}
The condition above is stated at the level of the complete theorem, whereas the
\Fyan{} Judge in Section~\ref{sec:link1} records local correspondences. For the logical fragment
covered by our audit, these two levels are compatible: scope-aware local
certificates can be composed along the logical structure of the statement
according to their polarity. In particular,
certificates for aligned objects and admissible local changes under conjunction,
disjunction, implication, and quantifiers can be combined to construct the
global maps $e$, $a$, and $g$, provided that coverage, scope, and witness
dependencies are preserved. Thus the directional rules used by the \Fyan{}
Judge are not merely local heuristics; they form a sufficient certificate
discipline for whole-statement proof transfer
(Theorem~\ref{thm:compositional-transfer}, Appendix~\ref{app:transfer:global}).
\iffalse
\paragraph{Bridge obligations.}
A recoverable change is still a semantic change. Therefore, Fyan does not
upgrade a \texttt{modified} statement to \texttt{aligned} simply because a
transfer exists. Instead, once a non-identical candidate $S_1$ is accepted,
Fyan freezes it and creates an explicit bridge obligation to the
source-facing statement $S_0$. The prover must then construct a Lean theorem
showing that every proof of $S_1$ can be converted into a proof of $S_0$ using
the required object, hypothesis, and claim transfers. The bridge is checked by
Lean and stored as part of the project, so the accepted semantic difference is
resolved by a formal proof rather than by the Judge's natural-language
explanation alone. The source-facing bridge statement is also re-audited
before the obligation is considered discharged.
\fi
\paragraph{Bridge obligations.}
A recoverable change is still a semantic change. \Fyan{} does not
reclassify a \texttt{modified} statement as \texttt{aligned} merely because a
transfer exists. By default, grades~2 and~3 are accepted directly, whereas a
grade~1 statement is returned for repair. If the acceptance threshold is lowered
to grade~1, the pipeline instead freezes $S_1$ and records a bridge obligation to
a source-facing statement $S_0$. The prover must discharge this obligation by
establishing $S_1 \rightarrow S_0$ in Lean, and the bridge statement is itself
re-audited before being frozen. Thus any accepted semantic deviation is backed
by a formal proof rather than by the \Fyan{} Judge's natural-language evidence
alone.\Fyan{} therefore keeps semantic classification separate from proof-theoretic recoverability

The bridge certifies only the formal implication $S_1 \rightarrow S_0$; whether
$S_0$ faithfully represents the source remains the responsibility of the
semantic audit in Section~\ref{sec:link1}. Appendix~\ref{app:transfer} gives the
full compositional transfer theorem and its proof.
\providecommand{\odeReplay}{17}

\section{Evaluation}
\label{sec:experiments}
\label{sec:exp}
\label{sec:results}

We ask whether the harness improves verified proof construction with the model
and verifier held fixed, what the semantic audit reveals beyond direct judgment
and reference equivalence, and whether \Fyan{} sustains a dependency-linked
formalization project.

\subsection{Experimental Setup}
\label{sec:setup}
\label{sec:exp:setup}
\label{sec:experimental-setup}

\paragraph{Stage-wise evaluation.}
FormalTCS~\citep{wang2026formaltcs} contains \Ntcs{} theorems and separately evaluates statement
translation (NT2FT), natural-language proof generation (T2NP), and formal proof
construction (FT2FP), preventing errors from propagating across stages. We
compare \Fyan{} with a general-purpose agent harness using the same
DeepSeek-V4.1-Flash model, Lean environment, and released graders, with one
output per item. NT2FT is evaluated by BEq$^+$, T2NP by the benchmark rubric
judge, and FT2FP by strict Lean verification, including compilation, signature
preservation, axiom checking, and fresh kernel replay. Full configurations are
given in Appendix~\ref{app:evaluation}. A stand-alone run of the \Formalizer{} proof engine on ProofNet\# is
reported in Appendix~\ref{app:proofnet}.

\paragraph{Semantic-audit evaluation.}
We evaluate the relation-based audit on \ccTN{} ConsistencyCheck statements,
balanced between expert-labelled consistent and inconsistent pairs. The baseline
is a direct judge using the same model and per-call reasoning setting. The audit
instead performs source extraction, local alignment, and deterministic
validation, so the comparison isolates the auditing procedure. Details appear
in Appendix~\ref{app:evaluation}.

\paragraph{Document-level evaluation.}
We use ODENumLib, a dependency-linked Lean development built with \Fyan{}, to
evaluate the workflow beyond isolated theorem proving. The project contains
multiple theorem units that reuse proved upstream results, allowing us to examine
project-scale dependency management and theorem reuse. We report its number of
units, Lean artifact size, completion and replay status under the pinned
environment.
\subsection{Verified Proof Production}
\label{sec:main}
\label{sec:exp:axes}

\begin{table}[!ht]
\centering
\normalsize
\caption{\textbf{Controlled comparison on FormalTCS}
(\Ntcs{} items; one output per item).}
\label{tab:fts-main}
\setlength{\tabcolsep}{5pt}
\begin{tabular}{@{}llccc@{}}
\toprule
Task & Metric & Control & \Fyan{} & Difference \\
\midrule
FT2FP & strict Lean pass
  & \ftfpCtl/\Ntcs{} (\ftfpCtlPct\%)
  & \textbf{\ftfpFyan/\Ntcs{} (\ftfpFyanPct\%)} & $+17$ \\
T2NP & rubric mean
  & \tnpCtl & \textbf{\tnpFyan} & $+0.350$ \\
NT2FT & BEq$^+$ equivalent
  & \ntftCtl/\Ntcs{} (\ntftCtlPct\%)
  & \textbf{\ntftFyan/\Ntcs{} (\ntftFyanPct\%)} & $+3$ \\
\bottomrule
\end{tabular}
\end{table}

\iffalse
\Fyan{} improves both formal and informal proving (Table~\ref{tab:fts-main}).
The NT2FT counts are low for both arms because reference matching, rather than the
statements, sets the ceiling: BEq$^+$ does not identify independently declared
types, which most FormalTCS references introduce (Section~\ref{sec:analysis}).
This ceiling binds every system: the strongest result reported by the
benchmark authors, from Claude Opus~5, is \pubBest\% with eight samples per
item~\citep{wang2026formaltcs}, whereas both arms here submit a single output
from a smaller model. We therefore assess statement faithfulness directly with
the semantic audit.
\fi
\Fyan{} improves both formal proof construction and natural-language reasoning
(Table~\ref{tab:fts-main}). The NT2FT scores remain low for both systems, largely
because BEq$^+$ requires the generated statement to match the reference
formalization closely enough for equivalence to be established. In particular,
it does not identify independently declared types, which appear in many
FormalTCS references (Section~\ref{sec:analysis}). This limitation affects all
systems evaluated with the same metric: the strongest result reported by the
benchmark authors, obtained with Claude Opus~5 and eight samples per item, is
\pubBest\%~\citep{wang2026formaltcs}, whereas both systems here produce a single
output using a smaller model. We therefore evaluate statement faithfulness
separately using the semantic audit.

\subsection{Semantic Auditing for Faithfulness}
\label{sec:analysis}
\label{sec:grade}
\label{sec:beq}
\label{sec:exp:beq}
\label{sec:reliability}
\label{sec:exp:judge}
\label{sec:analysis:judge}
\label{sec:analysis:slots}

We next ask whether structured semantic auditing detects errors that are missed
by a direct, one-shot judgment. What a gate in front of a reusable library must
deliver is reliability: an unfaithful formalization has to be caught and sent
back to be rebuilt. The two kinds of error are therefore far from symmetric. A
false alarm, even on a statement that is in fact equivalent to its source,
costs one more autoformalization round: the statement returns with a localized
repair request and the autoformalizer revises it. A missed inconsistency is
silent: the altered statement is frozen, proved, and imported by downstream
units, no later check revisits it, and every proof that builds on it certifies
a theorem other than the one the document states. In a large project the first
cost is routine and the second is unacceptable. We therefore treat recall on
inconsistent statements as the primary metric, well ahead of precision.

\begin{table}[!ht]
\centering
\small
\caption{\textbf{Detecting inconsistent statements on ConsistencyCheck}
(\ccTN{} items; inconsistent is the positive class). Missed errors are
inconsistent statements judged consistent. The verified labels come from
re-verifying all \ccAdjN{} statements against a single criterion
(Section~\ref{sec:analysis}): \ccAdjRelabel{} statements that the original
labels call consistent become inconsistent, and none moves the other way. Bold marks the better method in each
column; shaded rows are the \Fyan{} judge. Confusion counts, F1, and bootstrap
intervals are in Appendix~\ref{app:cc}.}
\label{tab:cc}
\setlength{\tabcolsep}{6pt}
\renewcommand{\arraystretch}{1.18}
\begin{tabular}{@{}lccccc@{}}
\toprule
 & Recall $\uparrow$ & Missed errors $\downarrow$ & Precision $\uparrow$ & False alarms $\downarrow$ & Accuracy $\uparrow$ \\
\midrule
\multicolumn{6}{@{}l}{\emph{Original labels}} \\
\quad Direct judge
  & \ccTDirR
  & \ccTDirFN
  & \textbf{\ccTDirP}
  & \textbf{\ccTDirFP}
  & \textbf{\ccTDirAcc} \\
\rowcolor{ccAuditRow}
\quad \Fyan{} judge
  & \textbf{\ccTAuditR}
  & \textbf{\ccTAuditFN}
  & \ccTAuditP
  & \ccTAuditFP
  & \ccTAuditAcc \\
\midrule
\multicolumn{6}{@{}l}{\emph{Verified labels}} \\
\quad Direct judge
  & \ccAdjDirR
  & \ccAdjDirFN
  & \textbf{\ccAdjDirP}
  & \textbf{\ccAdjDirFP}
  & \ccAdjDirAcc \\
\rowcolor{ccAuditRow}
\quad \Fyan{} judge
  & \textbf{\ccAdjAuditR}
  & \textbf{\ccAdjAuditFN}
  & \ccAdjAuditP
  & \ccAdjAuditFP
  & \textbf{\ccAdjAuditAcc} \\
\bottomrule
\end{tabular}
\end{table}

\begin{figure}[!ht]
\centering
\begin{tikzpicture}[x=1mm,y=1mm,font=\sffamily\scriptsize]
% one count = 0.9 mm; panel origins
\def\u{0.56}
\def\xa{29}
\def\xb{84}
\def\h{1.7}
% gridlines and ticks
\foreach \x in {\xa,\xb} {
  \foreach \t in {0,20,40,60,80} {
    \draw[black!12, line width=0.4pt] ({\x+\t*\u},4.6) -- ({\x+\t*\u},33.2);
    \node[text=black!55] at ({\x+\t*\u},2.9) {\t};
  }
  \draw[black!45, line width=0.5pt] (\x,4.6) -- (\x,33.2);
}
% panel titles
\node[anchor=base west, font=\sffamily\footnotesize] at (\xa,36.2) {\textbf{Missed inconsistencies}\enspace{\scriptsize\color{black!60}fewer is better}};
\node[anchor=base west, font=\sffamily\footnotesize] at (\xb,36.2) {\textbf{False alarms}\enspace{\scriptsize\color{black!60}fewer is better}};
% row labels
\node[anchor=base west, font=\sffamily\scriptsize\bfseries] at (0,30.4) {Original labels};
\node[anchor=west] at (1.5,27) {Direct judge};
\node[anchor=west] at (1.5,22.6) {\Fyan{} judge};
\node[anchor=base west, font=\sffamily\scriptsize\bfseries] at (0,15.9) {Verified labels};
\node[anchor=west] at (1.5,12.5) {Direct judge};
\node[anchor=west] at (1.5,8.1) {\Fyan{} judge};
% missed inconsistencies
\fill[ccOrange] (\xa,{27-\h}) rectangle ({\xa+27*\u},{27+\h});
\node[anchor=west] at ({\xa+27*\u+0.6},27) {27};
\fill[ccBlue] (\xa,{22.6-\h}) rectangle ({\xa+19*\u},{22.6+\h});
\node[anchor=west] at ({\xa+19*\u+0.6},22.6) {19};
\fill[ccOrange] (\xa,{12.5-\h}) rectangle ({\xa+75*\u},{12.5+\h});
\node[anchor=west] at ({\xa+75*\u+0.6},12.5) {75};
\fill[ccBlue] (\xa,{8.1-\h}) rectangle ({\xa+46*\u},{8.1+\h});
\node[anchor=west] at ({\xa+46*\u+0.6},8.1) {46};
% false alarms (audit split by grade: 1 / 0 / no verdict; 0.35 mm surface gaps)
\fill[ccOrange] (\xb,{27-\h}) rectangle ({\xb+8*\u},{27+\h});
\node[anchor=west] at ({\xb+8*\u+0.6},27) {8};
\fill[ccGradeOne] (\xb,{22.6-\h}) rectangle ({\xb+18*\u-0.35},{22.6+\h});
\fill[ccGradeZero] ({\xb+18*\u},{22.6-\h}) rectangle ({\xb+36*\u-0.35},{22.6+\h});
\fill[ccNoVerdict] ({\xb+36*\u},{22.6-\h}) rectangle ({\xb+39*\u},{22.6+\h});
\node[anchor=west] at ({\xb+39*\u+0.6},22.6) {39};
\node[anchor=west] at ({\xb+0.6},12.5) {0};
\fill[ccGradeOne] (\xb,{8.1-\h}) rectangle ({\xb+5*\u-0.35},{8.1+\h});
\fill[ccGradeZero] ({\xb+5*\u},{8.1-\h}) rectangle ({\xb+9*\u-0.35},{8.1+\h});
\fill[ccNoVerdict] ({\xb+9*\u},{8.1-\h}) rectangle ({\xb+10*\u},{8.1+\h});
\node[anchor=west] at ({\xb+10*\u+0.6},8.1) {10};
% legend
\newcommand{\ccSw}[1]{\tikz[baseline=-0.6ex]\fill[#1] (0,-0.9mm) rectangle (2.8mm,0.9mm);}
\node[anchor=base west] at (0,-1.6) {%
  \ccSw{ccOrange}~Direct judge\hspace{4mm}%
  \ccSw{ccBlue}~\Fyan{} judge\hspace{6mm}%
  \Fyan{} judge false alarms:\hspace{1.5mm}%
  \ccSw{ccGradeOne}~grade~1\hspace{3mm}%
  \ccSw{ccGradeZero}~grade~0\hspace{3mm}%
  \ccSw{ccNoVerdict}~unknown};
\end{tikzpicture}
\caption{\textbf{Errors on ConsistencyCheck under the original and the verified
labels} (Table~\ref{tab:cc}). The \Fyan{} judge's false alarms are split by grade;
\emph{unknown} marks runs without a valid record.}
\label{fig:cc-errors}
\end{figure}

\paragraph{Fewer missed inconsistencies.}
Under the original labels the audit already leads on recall
(Table~\ref{tab:cc}); with its runs without a valid record counted as
acceptances, recall is \ccTOpenR{} against \ccTDirR{}
(Appendix~\ref{app:audit-errors}). Each
rejection also names the source object, hypothesis, claim, or scope at fault
rather than returning a single bit. Of the \ccTOnlyAudit{} items that
only the audit classifies correctly under these labels, \ccTStruct{} are structural errors in
scope, quantification, or intervals that a holistic reading passes over, of the
kind shown by the development item in case~(a) of Table~\ref{tab:audit-cases}.

\paragraph{Grade-1 false alarms.}
Of the \ccTAuditFP{} statements that the original labels call consistent but
the audit rejects, \ccTPolicy{} receive grade~1. Re-verification (below) finds
that \ccVerGOneNon{} of these are not the text's statement: \ccVerGOneArg{}
are equivalent to it only by an argument---an enlarged domain that the other
hypotheses restore, a representation that needs a translation, a codomain
widened from a subspace to the whole space---and \ccVerGOneOther{} rest on a
reading that the text does not state or differ from it outright; in
\code{exercise\_1\_13a} the Lean statement omits the connectedness of the
domain and is false. The original labels accept these statements; the audit,
which certifies alignment, grades them as \code{modified}. The other
\ccVerGOneSame{} grade-1 reports and \ccVerRestSame{} further rejections are
false alarms of the audit itself (Appendix~\ref{app:audit-errors}).

\paragraph{Verified labels.}
We re-verified all \ccAdjN{} statements against one criterion: a formal
statement is consistent only if it restates the text, differing at most by
conversions a reader sees at a glance. Of the 150 statements that the original
labels call consistent, \ccAdjSame{} meet it; \ccAdjEqNotSame{} are equivalent
to the text only by an argument, \ccAdjUnstated{} rely on a reading or an
assumption that the text does not state, and \ccAdjNotEq{} are not equivalent
to it. In \code{exercise\_3\_2\_21}, for instance, the hypothesis that $\sigma$
and $\tau$ disturb no common element becomes the requirement that every element
is moved by exactly one of them, which together with $\sigma\tau=e$ is
unsatisfiable, so the Lean theorem is vacuous. None of the 150 statements
labelled inconsistent meets the criterion, as expected: the criterion is
stricter than the benchmark's notion of consistency, so an inconsistent label
could move only if it were simply wrong. Relabelling the \ccAdjRelabel{}
statements as inconsistent gives the verified labels (Table~\ref{tab:cc} and
Figure~\ref{fig:cc-errors}). The criterion is the one the direct judge is
instructed to apply: the same objects and domains, hypotheses, and conclusion,
with nothing missing, added, weakened, or strengthened
(Appendix~\ref{app:cc}). It nevertheless accepts \ccAdjDirAcceptsRelabel{} of
the \ccAdjRelabel{} relabelled statements, whereas the audit, which encodes the
criterion in its closed list of conversions and its grade threshold, rejects
\ccAdjAuditRejectsRelabel{}. Under the verified labels the audit leads on recall and accuracy, and its
false alarms fall from \ccTAuditFP{} to \ccAdjAuditFP{} (Table~\ref{tab:cc},
Figure~\ref{fig:cc-errors}). It is correct alone on \ccAdjMcAudit{} statements and the direct judge on
\ccAdjMcDir{} (exact McNemar $p=\ccAdjMcP$; Appendix~\ref{app:audit-errors}).
Every one of the \ccAdjAuditFN{} inconsistencies that the audit misses is a
grade-2 acceptance, in which the Judge recorded a difference as one of the
listed conversions, and all \ccAdjGradeThree{} statements graded~3 are
consistent. Grade~3 is rare, so the audit's remaining errors sit at the grade-2
boundary, in the list of admissible conversions.

The criterion decides cases in both directions. In
\code{exercise\_3\_1\_22a} the text asks to show that the intersection of two
normal subgroups is normal and the Lean states
\code{Subgroup.Normal (H}~$\sqcap$~\code{K)}; the experts and the direct judge
accept it, while the audit grades it~1 because its Judge records Mathlib's
infimum of subgroups as an argued equivalence to intersection rather than as
its standard name, and the statement remains a false alarm. In
\code{exercise\_7\_11} the text asks to prove that every normal operator on a
complex inner-product space has a square root, and the Lean adds
\code{[FiniteDimensional}~$\mathbb{C}$~\code{V]}. The experts and the direct
judge accept it, reading in the finite dimension that the source textbook
assumes throughout; the audit rejects it because that hypothesis is not in the
statement. By the criterion the audit is right: a condition that is not in the
text must not be assumed, however standard it is in the text's source.
Statements that are stronger than the text, or equivalent to it only by a
non-trivial argument, are likewise not the same statement, even when a proof of
one yields the other. Were they counted as the same, one would have to decide
how long a bridge proof may be and still witness equivalence, and no such line
can be drawn without ambiguity. The audit therefore accepts only agreement that
is evident, grades argued equivalences and one-way changes~1, and leaves them
to a Lean-checked bridge rather than to the Judge's word; its aim is to make
every acceptance as certain as possible, with one standard fixed before any
statement is judged and applied uniformly.

% One real audit record per yardstick (direct LLM judge, expert labels, BEq+).
\begin{table}[t]
\centering
\small
\caption{\textbf{One audit, three yardsticks} (real records, abridged). Each
yardstick returns one bit; the audit localizes the difference, grades it, and
routes the statement. Relations and quoted arguments come from the \Fyan{}
Judge; polarity and grades are computed by code, and ($-$) marks negative
polarity. Case~(a) is from the diagnostic pool, not the evaluation sample.}
\label{tab:audit-cases}
\setlength{\tabcolsep}{4pt}
\renewcommand{\arraystretch}{1.1}
\begin{tabular}{@{}
  >{\raggedright\arraybackslash}p{2.3cm}
  >{\raggedright\arraybackslash}p{2.35cm}
  >{\raggedright\arraybackslash}p{6.85cm}
  >{\raggedright\arraybackslash}p{1.55cm}@{}}
\toprule
Case & Yardstick & Audit record (abridged) & Grade \\
\midrule
(a) Arithmetic series (miniF2F)
& Direct judge: \emph{consistent}\newline
  Experts: \emph{inconsistent}
& \code{$\sum$ k in range 5, a + k * d = 70} parses as $(\sum_k a)+kd$ with a
  free \code{k}. Hypothesis slot ($-$), \code{unaligned}: ``$5a+kd=70$ constrains a
  different object; $a=28$, $kd=-70$ satisfies the candidate's hypotheses but
  not the source.''
& \textbf{0}~$\rightarrow$ repair \\
\addlinespace
(b) Floor equation (miniF2F)
& Experts: \emph{consistent}\newline
  Direct judge: \emph{consistent}
& Source: $a=p/q$ with $p,q$ relatively prime positive integers. Binder slot
  \code{a :}~$\mathbb{Q}$ ($-$) also admits $a\le 0$: \code{mutated\_weaker},
  so at negative polarity ``the candidate statement implies the source
  statement (a proof transfers), but it says something else''; repair:
  restrict \code{a} to the positive rationals.
& \textbf{1}~$\rightarrow$ repair or bridge \\
\addlinespace
(c) Product measure bounds (FormalTCS)
& BEq$^+$: \code{not\_proven}
& Every row names its conversion: $n\ge 2$ as \code{n :}~$\mathbb{N}$,
  $n\ge2$ (condition moved into a type); $D_1,\dots,D_n$ as
  \code{Fin n}~$\to$~\code{ProbabilityMeasure}~$\mathbb{R}$ (reindexing);
  the product law as \code{productLaw} (Mathlib vs.\ local definition); both
  bounds \code{same} or reindexed. Premise: a division relies on Lean's total
  division, as the text never excludes zero.
& \textbf{2}~$\rightarrow$ accept \\
\bottomrule
\end{tabular}
\end{table}

\paragraph{One audit, three yardsticks.}
Table~\ref{tab:audit-cases} sets the audit against the three yardsticks in
current use, on one real record each. The direct judge passes~(a) as a whole, although a
misplaced summation delimiter leaves a free variable and makes the candidate
false; the audit exhibits an instance on which a hypothesis slot and its source
sentence disagree. The experts pass~(b), which the audit grades~1 with a
proposed repair. Such cases are where
a binary label must make a policy choice: of the \ccTGradeOne{} statements
graded~1, the experts call \ccTPolicy{} consistent and \ccTGradeOneInc{}
inconsistent, whereas the grade names the modification and leaves the decision
to an explicit threshold. BEq$^+$ does not certify~(c), although each
conversion is routine and named in the record, because it does not identify
independently declared types. Of the \Ntcs{}
FormalTCS references, \genRefs{} declare a \code{structure}, \code{inductive},
or \code{class}, and all \ntftCtl{} NT2FT outputs of the control harness that
BEq$^+$ certifies lie among the other \defRefs{} (one-sided Fisher
$p=\genFisher$; Appendix~\ref{app:beq}). A score against any of these
yardsticks therefore cannot by itself serve as the acceptance criterion of a
library; the graded audit is designed to be one.

\subsection{Document-Level Formalization}
\label{sec:document}
\label{sec:results:scale}

ODENumLib, built with \Fyan{}, formalizes the numerical analysis of ordinary
differential equations in \odeDone{} dependency-linked theorem units
(Table~\ref{tab:ode-main}): discrete Gr\"onwall inequalities, Taylor and
interpolation remainders, local existence and uniqueness, and the error
analysis of standard solvers.

\begin{table}[!ht]
\centering
\normalsize
\caption{\textbf{ODENumLib at the document level.}
A unit is formalized when it contains no \code{sorry} and introduces no new
axioms in its original environment. Replay is measured under the currently
pinned Lean and Mathlib revisions.}
\label{tab:ode-main}
\setlength{\tabcolsep}{6pt}
\begin{tabular}{@{}lr@{}}
\toprule
Measure & Result \\
\midrule
Formalized theorem units & \odeDone{} \\
Replaying under the current environment & \odeReplay{} \\
Requiring library porting & 3 \\
Lean artifact & \odeLines{} lines; \odeThm{} theorems; \odeLem{} lemmas \\
\bottomrule
\end{tabular}
\end{table}

Several units are substantial formalizations in their own right. They include
error bounds for the whole family of $\theta$-methods (1,551 lines of Lean),
the moment conditions that characterize consistency of linear multistep
methods (1,439 lines), stability regions and L-stability (713 lines), a
multistep Gr\"onwall inequality, and the well-posedness and convergence of the
implicit backward-Euler and trapezoidal rules. Units are formalized in
dependency order and import the modules they depend on rather than re-proving
them; the complete inventory is in Appendix~\ref{app:ode}.

\section{Conclusion and Limitations}
\label{sec:limits}
\label{sec:discussion}
\label{sec:discussion:guarantee}
\label{sec:discussion:judge}
\label{sec:discussion:evaluation}
\label{sec:discussion:system}
\label{sec:discussion:resources}
\label{sec:discussion:future}
\label{sec:exp:gate}
\label{sec:exp:ablation}
\label{sec:exp:scale}

We introduced \Fyan{}, a human--AI harness for document-level mathematical
formalization. Its design separates proof capability from semantic
faithfulness and formal correctness: models and human experts propose and
guide, while deterministic gates decide what enters the library. With the same
model, \Fyan{} proves \ftfpFyan{} of \Ntcs{} FormalTCS theorems under a strict
Lean check against \ftfpCtl{} for a general agent harness, and raises the
natural-language proof score from \tnpCtl{} to \tnpFyan{}. Its semantic audit
accepts only statements whose agreement with the source is evident, grades
every other difference, and names the object, hypothesis, claim, or scope at
fault; on ConsistencyCheck it catches more inconsistent statements than a
direct judge under both the original and the verified labels. \Fyan{} also
built ODENumLib, \odeDone{} dependency-linked theorem units of numerical
analysis.

For a library that later proofs import, reliability comes first. A false alarm
costs one more autoformalization round, whereas an accepted misformalization
silently undermines every proof built on it. Fixing one standard of semantic
equivalence in advance, and leaving every non-evident change to a
Lean-checked bridge, makes acceptance reproducible and as certain as the
evidence allows.

Our evaluation uses one model and one trajectory per item, ODENumLib covers a
single domain, and the audit costs more per statement than a single judgment.
Future work will extend the evaluation to more models and domains, ablate
\Curator{} and \Orchestrator{} supervision, and replace more of the Judge's
conversions with mechanically checked rules.

\clearpage
\subsection*{AI use statement}

Generative AI tools are an integral part of the \Fyan{} system studied in this
work and are used in the reported experiments for mathematical reasoning,
semantic analysis, and Lean proof construction. During the research process,
the authors also used generative AI tools to assist with software implementation
and debugging, literature
summarization, and drafting and editing the manuscript. AI-assisted mathematical
arguments, code, and manuscript content were reviewed by
the authors. The
authors take responsibility for the final content, claims, and artifacts
reported in this work.

\subsection*{Ethics statement}
This work does not involve human subjects or private or sensitive personal
data. \Fyan{} uses autonomous agents that may access executable tools, files,
and external information sources; such capabilities can introduce security and
resource-use risks if deployed without appropriate isolation. Our implementation
therefore scopes tool permissions, separates model-generated actions from
deterministic acceptance decisions, and is intended to run in an isolated,
low-privilege environment. Public benchmark data are used subject to their
respective licenses, and the source material used in the document-level case
study is open.

\subsection*{Reproducibility statement}
We provide detailed protocols and evaluation settings in
Appendix~\ref{app:evaluation}, additional semantic-audit analyses in
Appendix~\ref{app:audit-analysis}, and the complete ODENumLib case-study
inventory in Appendix~\ref{app:ode}. Appendix~\ref{app:audit-transfer} gives
the technical details of semantic validation and proof transfer,
Appendix~\ref{app:system} describes the system implementation and trust
boundaries, and Appendix~\ref{app:reproducibility} records the principal
configuration, prompt construction, and independent verification procedures.
The accompanying repository contains the implementation, prompts,
configurations, and supporting artifacts needed to reproduce the reported
experiments: \url{https://anonymous.4open.science/r/fyan-release-78C6}.

\subsection*{Acknowledgments}
This work was supported by National Key Research and Development Program
of China (No.~2022YFA1008200) and Shanghai Institute for Mathematics and
Interdisciplinary Sciences (SIMIS) under Grant SIMIS-ID-2025-ST. The
authors also acknowledge the assistance of GPT in preparing and revising the
manuscript.

\bibliography{references}
\bibliographystyle{plainnat}

\appendix

\section{Related Work}
\label{sec:related}

\paragraph{Mathematical reasoning and formalization.}
Language models assist mathematics through both informal reasoning and formal
proof construction. Informal systems generate, decompose, and critique
natural-language arguments, including multi-agent proof workflows
\citep{an2026qed,yang2025formalreasoning}. Formal systems translate statements
into proof-assistant languages~\citep{wu2022autoformalization,
azerbayev2023proofnet}, generate proofs or tactics
\citep{yang2023leandojo,xin2025deepseekproverv15,lin2025goedelproverv2}, and
combine informal sketches with formal search~\citep{jiang2023dsp,
ren2025deepseekproverv2}. Compiler-in-the-loop agents add diagnosis and repair
\citep{thakur2023copra,first2023baldur,ospanov2025apollo,
liu2026numinaleanagent}. Yet a checkable proof establishes only the supplied
formal statement, not that the statement preserves its informal source.

\paragraph{Semantic faithfulness.}
This distinction motivates work on semantic faithfulness.
Evaluations include expert assessment and curated informal--formal
pairs~\citep{wu2022autoformalization,azerbayev2023proofnet}, bidirectional
provability (BEq/BEq$^+$~\citep{liu2025beq}), learned or prompted alignment
scorers~\citep{lu2025formalalign,zhang2026beyondcompilation}, structural
distance~\citep{liu2025gted,liu2025assess}, expert-authored shadow statements
\citep{han2026shadowbench}, and deterministic checks for vacuity and benchmark
defects~\citep{ammanamanchi2026faults}. These signals are complementary:
expert evaluation is costly, structural similarity need not imply semantic
equivalence, and bidirectional provability of closed propositions need not
imply semantic faithfulness. Structured evaluators and
checklists decompose a global verdict into explicit criteria
\citep{liu2023geval,kim2024prometheus,viswanathan2025checklists,
gunjal2025rubricsasrewards}, but still require controls for omitted
comparisons and unsupported judgments.

\paragraph{Agents and document-level harnesses.}
Agentic proving and software-engineering systems combine decomposition, tool
use, feedback, repair, and reusable experience
\citep{thakur2023copra,wang2024legoprover,kumarappan2025leanagent,
yang2024sweagent,wang2025openhands}. Beyond single theorems,
\Archon~\citep{frenzymath2026archon,ju2026conjecture} coordinates proof work
across dependent declarations, while related efforts target libraries,
textbooks, and papers~\citep{mathinc2025gauss,zhang2026leanmarathon,
gloeckle2026textbook,rammal2026autoformbot,wang2026m2f,
cabral2026proofflow}. Blueprint-style workflows similarly support large human
formalization projects
\citep{massot_leanblueprint,scholze2022lte,tao2023pfrblueprint,
bolan2025equational}. These settings link statement assessment with artifact
versions, scheduling, and completion criteria. \Fyan\ integrates these ideas
through semantic auditing, checked transfer obligations, and deterministic
acceptance conditions around existing reasoning and proving systems.

\section{Experimental Details}
\label{app:evaluation}
\label{app:fts}

This appendix provides the experimental protocols and outcome accounting for
Section~\ref{sec:exp}. Additional analyses of semantic-audit behavior are
reported in Appendix~\ref{app:audit-analysis}.

\subsection{FormalTCS}
\label{app:fts-config}

FormalTCS evaluates three stages of formalization independently: NT2FT maps a
natural-language theorem to a Lean statement, T2NP generates a natural-language
proof from a theorem and its Lean statement, and FT2FP constructs a Lean proof
from a fixed statement. Because the benchmark provides annotated inputs
separately for each task, errors do not propagate across stages.

We compare \Fyan{} with a general-purpose agent harness using the same model,
Lean environment, released graders, and one candidate per item.
Table~\ref{tab:fts-config} summarizes the experimental configuration.

\begin{table}[h]
\centering
\normalsize
\caption{FormalTCS experimental configuration. Shared settings apply to both
the control and \Fyan{}.}
\label{tab:fts-config}
\setlength{\tabcolsep}{4pt}
\begin{tabular}{@{}p{2.2cm}p{10.9cm}@{}}
\toprule
Setting & Configuration \\
\midrule
Model
& DeepSeek-V4.1-Flash (\code{deepseek-flash}) \\

Sampling
& T2NP: $T=0$, top-$p=1$; NT2FT and FT2FP:
  $T=0.6$, top-$p=0.9$; one candidate per item, submitted once to the
  grader; no resampling \\

Lean
& Lean 4.32.2; Mathlib revision \code{905b958}; prebuilt cache unchanged \\

Graders
& Released benchmark graders: BEq$^+$ for NT2FT, the FormalTCS
  Qwen3.8-Max rubric for T2NP, and strict Lean verification for FT2FP \\

Control
& General-purpose agent using the benchmark prompts verbatim \\

\Fyan{}
& \Reasoner{} for T2NP and the \Formalizer{} for NT2FT/FT2FP,
  with the benchmark brief and stage preamble \\

Reference access
& Benchmark references are unavailable to both systems; reference files and
  credentials are hidden from the \Fyan{} agent by a \code{bwrap} sandbox \\
\bottomrule
\end{tabular}
\end{table}

The benchmark's default eight samples are reduced to one in both arms.

\paragraph{Outcome accounting.}
For NT2FT, the control produces 137
\code{not\_proven} outcomes, five \code{equivalent} outcomes, and one ill-typed
truncated file.

\subsection{ProofNet\#}
\label{app:proofnet}

ProofNet\# contains \pnsAll{} statements across its validation and test splits.
Under Lean 4.32.2, \pnsElig{} elaborate in the pinned environment and are
included in the evaluation. We run only the \Formalizer{} proof engine used
for FormalTCS, without the \Reasoner{} and \Planner{} stages, with one
trajectory and at most five prover iterations per theorem.

A theorem counts as solved only if it compiles, preserves the supplied
declaration, contains neither \code{sorry} nor a new axiom, and passes a
transitive axiom audit. The engine solves \pnsPass{} of the \pnsElig{} statements (\pnsPct\%). The remaining
outcomes are 109 proof failures, seven changed signatures, two compile failures,
and nine infrastructure failures; every non-pass is counted as a failure.

Published ProofNet results use a different Lean port and substantially larger
Pass@$N$ budgets, so we report this experiment only as an absolute
cross-corpus evaluation and do not use published results as a numerical
baseline.

\subsection{ConsistencyCheck}
\label{app:cc}

ConsistencyCheck contains 859 informal--formal pairs: 488 from miniF2F and 371
from ProofNet. Each statement is placed under its corresponding benchmark
header in Lean 4.32.2 with Mathlib revision \code{905b958}. We update retired
finite-sum notation in 66 statements, applying the same rewritten statement to
both evaluation methods. Ten items fail to elaborate because of library drift,
and 14 ProofNet entries are definitions rather than propositions, leaving 835
eligible items.

The reported evaluation uses \ccTN{} items sampled with a fixed seed, balanced
between 150 expert-labelled consistent and 150 inconsistent pairs. The
\Fyan{} judge runs the same source extraction, Lean-statement extraction,
local alignment, and deterministic validation used by \Fyan{}, with grades~2
and~3 treated as consistent. All other outcomes, including a missing valid
verdict, are treated as inconsistent. The direct baseline receives the same
source and Lean statement in a single model call, with the instruction to decide
whether the Lean statement faithfully formalizes the text: ``the same objects
and domains, the same hypotheses and the same conclusion, with nothing
missing, added, weakened or strengthened.'' Both methods use
DeepSeek-V4.1-Flash at low reasoning effort, temperature 0.6, and top-$p$ 0.9.

\begin{table}[h]
\centering
\normalsize
\caption{ConsistencyCheck confusion counts and aggregate metrics under the
original and the verified labels (Section~\ref{sec:analysis}).
``Inconsistent'' is the positive class. Invalid or missing verdicts of the
\Fyan{} judge are treated as predictions of inconsistency and are already included
in TP or FP; ``No verdict'' reports this diagnostic subset rather than an
additional outcome category. Brackets give 95\% bootstrap intervals.}
\label{tab:cc-counts}
\setlength{\tabcolsep}{4pt}
\begin{tabular}{@{}lrrrrrcc@{}}
\toprule
Method & TP & FN & TN & FP & No verdict & Accuracy & F1 \\
\midrule
\multicolumn{8}{@{}l}{\emph{Original labels}} \\
Direct judge
& 123 & \ccTDirFN & 142 & \ccTDirFP & 0
& \ccTDirAcc{} [\ccTDirAccCI] & \ccTDirFone \\
\Fyan{} judge
& 131 & \ccTAuditFN & 111 & \ccTAuditFP & \ccTNoVerdict
& \ccTAuditAcc{} [\ccTAuditAccCI] & \ccTAuditFone \\
\midrule
\multicolumn{8}{@{}l}{\emph{Verified labels}} \\
Direct judge
& \ccAdjDirTP & \ccAdjDirFN & \ccAdjDirTN & \ccAdjDirFP & 0
& \ccAdjDirAcc{} [\ccAdjDirAccCI] & \ccAdjDirFone \\
\Fyan{} judge
& \ccAdjAuditTP & \ccAdjAuditFN & \ccAdjAuditTN & \ccAdjAuditFP & \ccTNoVerdict
& \ccAdjAuditAcc{} [\ccAdjAuditAccCI] & \ccAdjAuditFone \\
\bottomrule
\end{tabular}
\end{table}

Under the original labels, the \Fyan{} judge is correct alone on
\ccTOnlyAudit{} items, including \ccTStruct{} structural errors, and the direct
judge on \ccTOnlyDir{}; under the verified labels, on \ccAdjMcAudit{} and
\ccAdjMcDir{} items respectively. The
main text reports recall first because the audit is used as a gate, where a
missed inconsistency propagates to downstream units while a false alarm
triggers another repair round; \ccTPolicy{} of the audit's \ccTAuditFP{} false
alarms are grade-1 reports (Appendix~\ref{app:audit-errors}). The
audit is also more expensive: its estimated model cost is about
\ccTAuditCost{} CNY per item, compared with \ccTDirCost{} CNY for a direct
judgment. Further analysis
of these disagreements, grade~1 cases, and representative failure modes is
given in Appendix~\ref{app:audit-analysis}.

\section{Additional Semantic-Audit Analysis}
\label{app:audit-analysis}

This appendix provides additional analyses supporting
Section~\ref{sec:analysis}. We focus on three issues: the limitations of
reference-based equivalence, the main sources of disagreement in the
ConsistencyCheck evaluation, and representative semantic mismatches that can
survive compilation or holistic judgment.

\subsection{Limits of Reference-Based Equivalence}
\label{app:beq}

We first examine the behavior of the released BEq$^+$ grader used for the
FormalTCS NT2FT task. A reference file graded against itself receives
\code{equivalent}, whereas renaming a separately declared structure in an
otherwise identical copy can produce \code{not\_proven}. The grader elaborates
the reference and candidate in separate namespaces and shares only their longest
verbatim common prefix, so independently declared structures, inductives, and
classes need not be identified automatically.

As an additional calibration, each of 21 references and 21 candidates graded
against itself receives \code{equivalent}. However, \genRefs{} of the
\Ntcs{} FormalTCS reference preambles introduce a \code{structure},
\code{inductive}, or \code{class}, and all \ntftCtl{} BEq$^+$ successes of the
control harness on NT2FT lie among the other \defRefs{} items; under random placement this has
probability $\binom{\defRefs}{\ntftCtl}/\binom{\Ntcs}{\ntftCtl}=\genFisher$.
Table~\ref{tab:beqcases} shows three
representative cases in which a semantically reasonable candidate is not
certified as equivalent to the benchmark reference.

\begin{table}[h]
\centering
\normalsize
\caption{Representative FormalTCS candidates reported as
\code{not\_proven} by BEq$^+$. Lean code is abridged.}
\label{tab:beqcases}
\setlength{\tabcolsep}{3pt}
\begin{tabular}{@{}
  >{\raggedright\arraybackslash}p{2.0cm}
  >{\raggedright\arraybackslash}p{3.75cm}
  >{\raggedright\arraybackslash}p{3.65cm}
  >{\raggedright\arraybackslash}p{3.8cm}@{}}
\toprule
Claim & Reference & Candidate & Reason for failure \\
\midrule
Expected number of cycles
 & average using Lean's total division
 & explicit zero branch for an empty family
 & equivalence requires a case split outside the tactic ladder \\

ReLU networks
 & inductive representation over affine maps
 & inductive representation over matrices and biases
 & independently declared representations cannot be identified \\

Best loss in a hypothesis class
 & bounded infimum whose empty fibers contribute zero
 & infimum over the image of the class
 & the reference does not encode the source infimum faithfully \\
\bottomrule
\end{tabular}
\end{table}

For the first case, a separate Lean proof establishes equivalence by splitting
on whether the averaging family is empty. In the third, a direct Lean check
shows that the bounded infimum in the reference may receive a zero contribution
from elements outside the intended class, whereas the candidate takes the
infimum over the class itself. These examples illustrate two distinct
limitations: failed proof search need not indicate semantic mismatch, and a
benchmark reference need not be a perfect semantic ground truth. We therefore
use BEq$^+$ as a complementary benchmark signal rather than as the sole
criterion for semantic faithfulness.

\subsection{ConsistencyCheck Error Analysis}
\label{app:audit-errors}

The \Fyan{} judge and the direct judge make different types of errors.
Under the original labels (Appendix~\ref{app:cc}), the audit reduces false
negatives (missed inconsistencies) but produces more false positives (false
alarms): \ccTAuditFP{}, of which \ccTPolicy{} at grade~1, \ccTAuditFPgZero{}
at grade~0, and \ccTAuditFPnv{} without a valid record. Re-verification
(below) finds that \ccVerFPNon{} of them are not the text's
statement---\ccVerGOneNon{} of the grade-1 reports, \ccVerGZeroNon{} of the
grade-0 rejections, and \ccVerNvNon{} of the runs without a record---and that
\ccVerFPSame{} are false alarms of the audit itself.

Grade~1 is where a binary label must make a policy choice. Among the
\ccTGradeOne{} grade-1 records, the original labels call \ccTPolicy{}
consistent and \ccTGradeOneInc{} inconsistent. Of the \ccTPolicy{},
\ccVerGOneArg{} are equivalent to the text only by an argument---for example
a codomain widened from a subspace to the whole space, or a binder domain
enlarged beyond what the text allows and restored only through the other
hypotheses---and \ccVerGOneOther{} rest on a reading that the text does not
state or differ from it. The original labels accept these statements, whereas
the audit, which certifies alignment, records them as substantive changes.
Lowering the acceptance threshold to~1 would pass them, but also the
\ccTGradeOneInc{} statements that the original labels call inconsistent: under
those labels, recall would fall from \ccTAuditR{} to \ccTThrOneR{}
(\ccTThrOneTP/150) while precision would rise to \ccTThrOneP{}. \Fyan{}
therefore keeps grade~2 as the direct-acceptance threshold and admits a
grade-1 statement only through a Lean-checked bridge.

The \ccVerFPSame{} false alarms that remain under the verified
labels---\ccVerGOneSame{} at grade~1, \ccVerGZeroSame{} at grade~0, and
\ccVerNvSame{} without a valid record---are attributable to relation-direction
errors, source parsing, slot pairing, or verdict production. These are
implementation failures rather than semantic disagreements about the label,
and they are the target of future improvements to the audit. All
\ccAdjAuditFN{} inconsistencies that the audit misses under the verified labels
are grade-2 acceptances, in which the Judge recorded each difference as one of
the listed conversions.

\paragraph{Re-verification of the labels.}
All \ccAdjN{} statements were re-verified. Each was first screened by Claude
Opus 5.5, which saw only the source text and the Lean statement, without the
original label or either method's prediction, and was assigned to one of four
classes: the same statement up to conversions a reader sees at a glance;
equivalent only by an argument; dependent on a reading or an assumption that
the text does not state; or not equivalent. The final class of every
statement was confirmed by the authors. Of the 150 statements that the
original labels call consistent, \ccAdjSame{} are the same as the text,
\ccAdjEqNotSame{} are equivalent only by an argument, \ccAdjUnstated{} depend
on an unstated reading or assumption (among them the finite dimension that the
linear-algebra exercises \code{exercise\_7\_9} and \code{exercise\_7\_11} take
from their textbook), and \ccAdjNotEq{} are not equivalent; none of the 150
statements labelled inconsistent is the same as its text. Relabelling the
\ccAdjRelabel{} statements outside the first class as inconsistent leaves
\ccAdjInc{} inconsistent and \ccAdjCon{} consistent statements. Under these
labels the \Fyan{} judge has TP/FN/TN/FP
\ccAdjAuditTP/\ccAdjAuditFN/\ccAdjAuditTN/\ccAdjAuditFP{}, accuracy
\ccAdjAuditAcc{} [\ccAdjAuditAccCI], precision \ccAdjAuditP{}, recall
\ccAdjAuditR{} [\ccAdjAuditRCI], and F1 \ccAdjAuditFone{}; the direct judge
has \ccAdjDirTP/\ccAdjDirFN/\ccAdjDirTN/\ccAdjDirFP{}, accuracy
\ccAdjDirAcc{} [\ccAdjDirAccCI], precision \ccAdjDirP{}, recall \ccAdjDirR{}
[\ccAdjDirRCI], and F1 \ccAdjDirFone{} (brackets: 95\% bootstrap intervals).
The \Fyan{} judge is correct alone on \ccAdjMcAudit{} statements and the direct
judge on \ccAdjMcDir{} (exact McNemar $p=\ccAdjMcP$). These are the verified
rows of Table~\ref{tab:cc}.

\paragraph{Runs without a verdict.}
All \ccTNoVerdict{} runs without a verdict end the same way: the Judge fails
three consecutive times to produce a record that passes validation, and the
fail-closed rule records a rejection. \ccTNoVerdictInc{} of them fall on
expert-inconsistent items and \ccTNoVerdictCon{} on expert-consistent ones.
Excluding them, its
recall is \ccTExclR{} (124/143) and its precision \ccTExclP{}; counting all
\ccTNoVerdict{} as acceptances, as a fail-open gate would, gives recall
\ccTOpenR{} (\ccTOpenTP/150) against \ccTDirR{} (123/150) for the direct judge,
with accuracy \ccTOpenAcc{}.

Two of the \ccTNoVerdictCon{} expert-consistent items sit on the boundary that
the audit is designed to expose. In \code{675\_exercise\_7\_9}, the source
states that a normal operator on a complex inner-product space is self-adjoint
if and only if all its eigenvalues are real, and the Lean statement adds
finite-dimensionality. The source textbook assumes finite dimension
throughout, but the statement itself does not, and without that assumption the
reverse direction can fail; the expert label therefore rests on context outside
the statement. In \code{145\_amc12a\_2013\_p7}, the source constrains ten terms
$S_1,\dots,S_{10}$ by a recurrence from $n\ge 3$, whereas the Lean sequence
$s:\mathbb{N}\to\mathbb{R}$ satisfies the recurrence for every $n$, including
indices outside the source range. The two are equivalent, because any ten terms
satisfying the source recurrence extend to a sequence satisfying it everywhere,
but establishing this needs an extension argument across a change of indexing,
the renumbering case that the Judge prompt treats explicitly. In both cases the
validator refused a record it could not check rather than accept the statement
on the model's word; in the pipeline, such a statement returns for another
round instead of entering the library unverified.

This behavior explains why the audit uses more than a binary
consistent/inconsistent decision. Grades~2 and~3 indicate alignment up to
trivial equivalence, grade~1 records a substantive but potentially recoverable
change, and grade~0 indicates an unsupported mismatch. A grade-1 statement is
therefore not accepted merely because it appears mathematically reasonable; it
requires the proof-transfer mechanism described in Section~\ref{sec:transfer}.

The classes used in this appendix come from the same re-verification as the
verified labels (Appendix~\ref{app:audit-errors}): screening by Claude
Opus~5.5, blind to the labels and to both methods' predictions, followed by
author confirmation of every class.

\subsection{Representative Semantic Failure Cases}
\label{app:audit-cases}

\paragraph{A localized scope error.}
The arithmetic-series example, case~(a) of Table~\ref{tab:audit-cases},
illustrates a mismatch that survives both compilation and holistic judgment. The source
requires
\[
  \sum_{k=0}^{4}(a+kd)=70,
  \qquad
  \sum_{k=0}^{9}(a+kd)=210,
\]
whereas the candidate's misplaced delimiter yields expressions of the form
$5a+kd=70$ and $10a+kd=210$, with $k$ outside the summation. The candidate
therefore constrains different mathematical objects even though the overall
statement remains syntactically plausible. The direct judge accepts the
statement, while the \Fyan{} judge rejects the affected hypothesis rows and
localizes the missing occurrence of the common-difference term.

\paragraph{A definition-sensitive mismatch.}
A second example concerns Herstein's Exercise~4.3.25. The source refers to an
ideal of the ring of $2\times2$ real matrices, where the intended notion is
two-sided. In Lean, however, \code{Ideal} over a noncommutative semiring denotes
a left ideal. Under the source notion, the stated conclusion holds; under the
Lean notion, it does not.

The evaluation record already contained Mathlib evidence indicating that
\code{Ideal} is a left ideal, but the comparison did not force the record to
check whether this formal notion matched the source notion of ``ideal.'' The
resulting judgment therefore accepted a statement with a semantically different
mathematical object. This example illustrates an important limitation of
retrieval alone: relevant evidence may be available without affecting the
verdict unless the audit explicitly requires the corresponding comparison.
Definition-sensitive objects must therefore be treated as part of the semantic
alignment rather than as incidental implementation details.

\section{ODENumLib Case Study}
\label{app:ode}

ODENumLib is a dependency-linked Lean development generated from
numerical-analysis lecture notes. It was built before the semantic audit was
added to \Fyan{}, so its statements were accepted on compilation and the
completion gate alone. The project contains \odeUnits{} theorem
units covering discrete Gr\"onwall inequalities, Taylor and interpolation
remainders, one- and multistep convergence, and stability of standard ODE
solvers. Table~\ref{tab:ode} gives the complete unit-level inventory.

The ``Statement'' column compares the final main theorem with the retained
pre-proof scaffold: $=$ denotes an identical statement, $\approx$ a change in
syntax or notation only, $\Delta$ a substantive mathematical change, and --
indicates that no pre-proof scaffold was retained. ``Current'' reports replay
under Lean 4.32.2 and Mathlib revision \code{905b958}.

\begin{table}[h]
\centering
\normalsize
\caption{ODENumLib unit inventory and replay status under the pinned
Lean/Mathlib environment.}
\label{tab:ode}
\setlength{\tabcolsep}{2pt}
\renewcommand{\arraystretch}{0.95}
\begin{tabular}{@{}
  >{\raggedright\arraybackslash}p{4.1cm}
  rcc
  >{\raggedright\arraybackslash}p{5.9cm}@{}}
\toprule
Unit & Lines & Statement & Current & Main result \\
\midrule
exponential\_bounds
& 111 & $\approx$ & pass & product--exponential bound \\

taylor/remainder
& 186 & $\approx$ & pass & Taylor remainder \\

ode
& 62 & $\Delta$ & pass & local existence and uniqueness \\

gronwall/constant
& 162 & $=$ & pass & constant-coefficient discrete Gr\"onwall \\

gronwall/variable
& 223 & $=$ & pass & variable-coefficient discrete Gr\"onwall \\

taylor/lagrange
& 637 & $\Delta$ & port & interpolation error \\

method/consistency\_stability
& 198 & $\Delta$ & pass & forward-Euler step stability \\

gronwall/multistep
& 520 & $=$ & pass & multistep Gr\"onwall \\

gronwall/sum\_form
& 176 & $\approx$ & pass & sum-form Gr\"onwall \\

method/absolute\_stability
& 713 & $\Delta$ & pass & stability region and L-stability \\

multistep\_theory
& 1,439 & $=$ & port & moment conditions for consistency \\

convergence/single\_step
& 273 & $\Delta$ & pass & one-step convergence bound \\

expert
& 451 & $\Delta$ & pass & stiff-ODE example \\

convergence/dahlquist
& 652 & -- & pass & one direction of the convergence theorem \\

forward\_euler
& 393 & $=$ & pass & first-order global error \\

runge\_kutta\_2
& 261 & $\Delta$ & pass & RK2 consistency and convergence \\

runge\_kutta\_4
& 340 & $=$ & pass & fourth-order convergence; local defect assumed \\

backward\_euler
& 486 & $\Delta$ & pass & well-posedness and convergence \\

trapezoidal
& 521 & -- & port & well-posedness and convergence \\

theta\_method
& 1,551 & $\Delta$ & pass & error bounds for $\theta$-methods \\
\midrule
adams\_bashforth
& -- & -- & open & never built; propagator bound assumed \\

adams\_moulton
& -- & -- & open & two \code{sorry} terms in the main theorem \\

bdf
& -- & $=$ & open & six \code{sorry} terms \\
\bottomrule
\end{tabular}
\end{table}

\paragraph{Replay status.}
In the original environment, 20 of the 23 units were closed without
\code{sorry} or newly introduced axioms. Under the pinned environment,
17 replay directly. The three previously closed units
\code{taylor/lagrange}, \code{trapezoidal}, and \code{multistep\_theory}
require porting because of library-level type changes, while the remaining
three units are still open. We therefore distinguish original completion from
current-environment replay rather than treating them as the same notion of
success.

\paragraph{Statement changes.}
Among the 19 units with a retained pre-proof scaffold, nine undergo a
substantive change to the main statement. These changes are not uniformly
problematic: some repair an inadequate scaffold, while others make the final
theorem weaker or easier than the source-facing result. Such weakening can also
enter before proof search, as two further units illustrate. The scaffold of
\code{runge\_kutta\_4}, kept unchanged, already assumes the local truncation
estimate that the source result is intended to derive. In
\code{convergence/dahlquist}, for which no scaffold was retained, the final
theorem proves only one direction of the stated convergence result.

These examples explain why project completion cannot be assessed solely by
whether individual Lean files close. Once a theorem becomes a dependency, any
substantive change in its statement becomes part of the interface seen by
downstream units, motivating semantic review before reuse.

\section{Semantic Audit and Proof Transfer: Technical Details}
\label{app:audit-transfer}
\label{app:rubrics}
\label{app:transfer}
\label{sec:link4}
\label{sec:policy}
\label{sec:link23}

This appendix provides the technical details behind the semantic audit and
proof-transfer mechanism described in Sections~\ref{sec:link1}
and~\ref{sec:transfer}. The audit separates two roles. A language model
constructs localized evidence relating the source specification to the
candidate Lean statement, while deterministic code validates the structure,
coverage, logical direction, and internal consistency of that evidence.
When an accepted formalization differs substantively from its source-facing
interpretation, the resulting transfer obligation must ultimately be
discharged by Lean.

\subsection{Structured Evidence and Deterministic Validation}
\label{app:audit-record}
\label{sec:link4:method}
\label{sec:link4:verdict}

\paragraph{Audit input and semantic inventory.}
\iffalse
The Judge compares the source statement with the candidate theorem signature
in its actual Lean environment. Its input contains the source statement and
digest, the extracted theorem signature, and a proof-stripped Lean surface in
which relevant local definition bodies are retained. Candidate-authored
comments are removed because they constitute neither source evidence nor
instructions to the audit.
\fi
The \Fyan{} Judge compares the source statement with the candidate theorem signature
in its actual Lean environment. Its input contains the source text; the source
tree, extracted and independently reviewed before any Lean is shown, with
literal quotations; the Lean tree that the host reads from the elaborated
declaration, with the bodies of the definitions it reaches; and the slots cut
from both trees, with host-computed scope and polarity. Candidate-authored
comments never reach the \Fyan{} Judge, because they constitute neither source
evidence nor instructions to the audit.

Rather than producing a free-form verdict, the \Fyan{} Judge fills a structured
evidence record. Its principal fields are summarized in
Table~\ref{tab:judge-record}.

\begin{table}[t]
\centering
\normalsize
\caption{Principal fields of the semantic evidence record.}
\label{tab:judge-record}
\begin{tabular}{@{}>{\raggedright\arraybackslash}p{0.33\linewidth}p{0.63\linewidth}@{}}
\toprule
Field & Purpose \\
\midrule
%\code{objects}, \code{hypotheses}, \code{claims}
%& Semantic inventory of the candidate statement. \\
%
%\code{lean}, \code{source}
%& Lean expression and locatable source support for each item. \\
%
%\code{scope}, \code{polarity}
%& Logical position and available binders or premises. \\
%
%\code{relation}, \code{proof\_level}, \code{evidence}
%& Local source--Lean relation and the evidence supporting it. \\
%
%\code{affects}, \code{omissions}
%& Downstream effects of changed objects and source content missing from Lean. \\
\code{source\_graph}, \code{candidate}
& Reviewed source tree and the host-extracted Lean tree. \\

\code{source\_slots}, \code{candidate\_slots}
& Binder, hypothesis, and goal slots cut by the host from each tree. \\

\code{scope}, \code{polarity}
& Logical position of each slot and the binders or premises available to it,
computed by the host. \\

\code{relation}, \code{conversion}, \code{argument}, \code{difference},
\code{evidence}
& Local relation of a group of paired slots or used definitions; the named
conversion, argument, or concrete difference supporting it; and an optional
\code{repair}. \\

\code{affects}, \code{omissions}
& Downstream effects of changed objects, and unpaired slots, which the host
scores as omissions. \\

\code{checks}, \code{transfer}
& Required semantic checks and, when needed, object, hypothesis, and claim
transfer evidence. \\
\bottomrule
\end{tabular}
\end{table}

\iffalse
The inventory is Lean-first: every formal object, hypothesis, and claim must
be supported by the source. Since this pass cannot detect source content that
has disappeared entirely, the Judge also performs reverse source coverage and
records missing objects, hypotheses, or claims as explicit omissions.
Many-to-many correspondence is allowed.
\fi
The comparison is source-first and covers both sides. The source tree is fixed
and reviewed before any Lean is shown, and the host cuts both trees into slots,
so a slot on either side that no row pairs remains visible and is scored by the
host as an omission under the rule given below. Many-to-many correspondence is
allowed.

Restrictions encoded through types, subtypes, instances, implicit parameters,
or definitions are treated as mathematical content. When an object or
representation changes, the record also identifies the downstream hypotheses
and claims whose meaning depends on that change.

\paragraph{Local semantic relations.}
The relation assigned to a local correspondence records what conversion is
required between the source and Lean item, rather than whether the two complete
theorems happen to share a truth value. Table~\ref{tab:audit-relations}
gives the relation vocabulary together with its required evidence.

\begin{table}[t]
\centering
\normalsize
\caption{Local semantic relations and their evidence requirements.}
\label{tab:audit-relations}
\begin{tabular}{@{}
  >{\raggedright\arraybackslash}p{0.28\linewidth}
  >{\raggedright\arraybackslash}p{0.23\linewidth}
  >{\raggedright\arraybackslash}p{0.42\linewidth}@{}}
\toprule
Relation & Evidence level & Interpretation \\
\midrule
\code{same}
& none
& Same mathematical content and structure, up to consistent renaming. \\

\code{trivial\_equivalent}
& immediate
& Immediate equivalence after notation or checked definitions are expanded. \\

\code{mutated\_equivalent}
& bounded local proof
& Substantive reformulation with evidence in both directions. \\

\code{mutated\_stronger}
& bounded local proof
& Candidate implies source. \\

\code{mutated\_weaker}
& bounded local proof
& Source implies candidate. \\

\code{mutated}
& bounded local evidence
& Substantive object or representation change whose effects are tracked
downstream. \\

\code{unaligned}
& unresolved or over budget
& Mismatch, unsupported comparison, or relation not established within the
allowed comparison budget. \\
\bottomrule
\end{tabular}
\end{table}

A bounded local proof is a short derivation from the source context, checked
definitions, and identified conversion lemmas, with the relevant side
conditions stated explicitly. Reproving the original theorem or invoking a
major result that bypasses the local comparison does not qualify.
Consequently, \code{unaligned} means that the required relation has not been
established; it does not by itself prove mathematical inequivalence.

Polarity is computed relative to the complete theorem. At positive polarity,
a one-way change must support candidate $\rightarrow$ source; at negative
polarity the direction reverses. A mixed-polarity occurrence requires both
directions unless a larger enclosing comparison directly establishes the
required transfer.

Omissions follow the same rule. A deleted conjunct is represented locally by
replacement with $\top$, while a deleted disjunct is represented by $\bot$.
The omitted item retains its original scope and polarity, so omission is not
handled by a separate heuristic such as ``dropping assumptions is safe.''

\paragraph{Deterministic validation.}
The model does not choose the final grade or pipeline transition. Before
grading, deterministic code checks the record schema, identifiers,
cross-references, scope, admissible relation values, evidence levels, and
required fields. \iffalse
It also requires explicit checks for definition lookup,
domain boundaries, operator--operand correspondence, scope and witness
dependencies, source coverage, and consistency between the recorded evidence
and transfer directions.
\fi
It also checks that the source tree passed its
independent review, that every slot lies in a row or is scored as an omission,
that each relation is admissible at the host-computed polarity of its slots,
that each \code{trivial\_equivalent} row names its conversions from a closed
list that includes standard definition unfolding, that substantive and one-way
rows state their argument and \code{unaligned} rows their concrete difference,
and that no placeholder stands in for evidence. The \Fyan{} Judge's audit of the
original text covers stated hypotheses, hidden extra conditions, conclusion
strength, realized notions, domains and types, and non-vacuity. Consistency
between the recorded evidence and transfer directions is also checked.

When a substantive change or omission is present, the record additionally
supplies three transfer obligations:
\code{object\_coverage}, \code{hypothesis\_transfer}, and
\code{claim\_transfer}. These correspond to the maps $e$, $a$, and $g$
introduced below. Transfer evidence cannot override a failed local comparison.

Malformed reports are distinct from semantic failures. If deterministic
validation rejects the structure of a record, the same \Fyan{} Judge may repair its
form against the unchanged source and candidate. This is form repair rather
than repeated semantic voting. Once a valid unfavorable record has been
produced, it is not resampled merely to obtain a different judgment.

Every validated item, omission, required check, and required transfer
contributes to the non-compensatory minimum aggregation described in
Section~\ref{sec:link1}, producing grades
$3$ (\code{aligned}), $2$ (\code{aligned\_trivial}),
$1$ (\code{modified}), and $0$ (\code{failed}).
These are ordered policy labels rather than probabilities or mathematical
truth values; in particular, unresolved or over-budget comparisons may also
receive grade~0.

\subsection{From Local Evidence to Global Proof Transfer}
\label{app:transfer:global}
\label{app:transfer:composition}

The semantic record is local, whereas a reusable theorem requires a
whole-statement guarantee. We now state a sufficient condition under which
scope-aware local certificates compose into a proof transformer.

Let
\[
S_0 :=
\forall x:X,\; A_0(x)\rightarrow G_0(x),
\qquad
S_1 :=
\forall y:Y,\; A_1(y)\rightarrow G_1(y),
\]
where $S_0$ is a source-facing Lean statement and $S_1$ is the audited
candidate.

Suppose there are terms
\[
e:X\rightarrow Y,
\qquad
a:\forall x:X,\;A_0(x)\rightarrow A_1(e(x)),
\]
and
\[
g:\forall x:X,\;
A_0(x)\rightarrow G_1(e(x))\rightarrow G_0(x).
\]
Here $e$ maps admissible source objects into the candidate domain, $a$
supplies the assumptions needed to invoke the candidate theorem, and $g$
recovers the source conclusion from the candidate conclusion.

\begin{proposition}[Proof transfer]
\label{thm:transfer}
Given $e$, $a$, and $g$ above, every proof $p:S_1$ induces
\begin{equation}
\label{eq:transfer}
\kappa(p)(x)(h)
=
g\,x\,h
\bigl(
p\,(e(x))\,(a\,x\,h)
\bigr),
\end{equation}
and hence a canonical proof transformer
$\kappa:S_1\rightarrow S_0$. Such transfers compose.
\end{proposition}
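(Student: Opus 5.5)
The proof is a direct type-checking argument in dependent type theory (Curry--Howard): read $S_0$ and $S_1$ as dependent function types and check that the right-hand side of \eqref{eq:transfer} has the required type. Fix a proof $p:S_1$, an object $x:X$, and a hypothesis $h:A_0(x)$. We first apply $e$ to obtain $e(x):Y$. Instantiating $p$ at $e(x)$ then gives $p\,(e(x)):A_1(e(x))\rightarrow G_1(e(x))$. The term $a\,x\,h:A_1(e(x))$ is exactly its antecedent, so $p\,(e(x))\,(a\,x\,h):G_1(e(x))$. Finally, $g\,x\,h:G_1(e(x))\rightarrow G_0(x)$, and applying it yields a term of type $G_0(x)$.

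Abstracting over $h$ and then $x$ gives $\lambda x\,h.\,g\,x\,h\bigl(p\,(e(x))\,(a\,x\,h)\bigr):\forall x:X,\;A_0(x)\rightarrow G_0(x)=S_0$. Abstracting over $p$ gives $\kappa:S_1\rightarrow S_0$. The construction is canonical in the sense that it uses only $e$, $a$, $g$ and application, with no choice; it is literally the Lean term one would write. Since $S_0$ and $S_1$ are propositions, proof irrelevance makes any two such transformers equal, so nothing beyond well-typedness needs checking.

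For the composition claim, take a third statement $S_2:=\forall z:Z,\;A_2(z)\rightarrow G_2(z)$ with transfer data $(e',a',g')$ from $S_2$ to $S_1$, where $e':Y\rightarrow Z$. We build data from $S_2$ to $S_0$ as follows:
\[
e'':=e'\circ e,
\]
\[
a''\,x\,h:=a'\,(e(x))\,(a\,x\,h),
\]
\[
g''\,x\,h\,k:=g\,x\,h\bigl(g'\,(e(x))\,(a\,x\,h)\,k\bigr),
\]
where $k:G_2(e'(e(x)))$. The step needing care is $g''$: the inner $g'$ requires the intermediate hypothesis $A_1(e(x))$, and this is supplied by $a\,x\,h$. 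Once that is seen, the types check as before. It then remains to verify $\kappa_{(e'',a'',g'')}=\kappa_{(e,a,g)}\circ\kappa_{(e',a',g')}$, which holds by unfolding both sides (definitional $\beta$-reduction), or trivially by proof irrelevance. Identity data ($e=\mathrm{id}$, $a\,x\,h=h$, $g\,x\,h\,k=k$) gives the identity transfer, so transfers form a category, as claimed.
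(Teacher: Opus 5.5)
Your proposal is correct and follows essentially the paper's approach: a direct type check of $g\,x\,h\,(p\,(e(x))\,(a\,x\,h))$ for $x:X$ and $h:A_0(x)$, and for composition the same composite data (your $e$ and $e'$ are its $e_{10}$ and $e_{21}$, and your $a''$ and $g''$ match $a_{20}$ and $g_{20}$ term for term). Your further checks, that the composite data induce the composite transformer and that identity data give the identity, are correct and make explicit what the paper leaves implicit.
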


\begin{proof}
For $x:X$ and $h:A_0(x)$, the term $a\,x\,h$ has type
$A_1(e(x))$. Applying $p$ yields $G_1(e(x))$, and then
$g\,x\,h$ yields $G_0(x)$.

For composition, suppose $S_2\rightarrow S_1$ is witnessed by
$(e_{21},a_{21},g_{21})$ and $S_1\rightarrow S_0$ by
$(e_{10},a_{10},g_{10})$. Then
\[
e_{20}(x)=e_{21}(e_{10}(x)),
\]
\[
a_{20}(x,h)
=
a_{21}(e_{10}(x),a_{10}(x,h)),
\]
and
\[
g_{20}(x,h,k)
=
g_{10}\!\left(
x,h,
g_{21}(e_{10}(x),a_{10}(x,h),k)
\right)
\]
have the required types.
\end{proof}

The directions are forced by consequence: source assumptions must suffice to
apply the candidate theorem, whereas the candidate conclusion must suffice to
recover the source conclusion.

\paragraph{Signed local certificates.}
For corresponding source and candidate propositions $P$ and $Q$, define
\[
\mathrm{Dir}^{+}(P,Q):=Q\rightarrow P,
\qquad
\mathrm{Dir}^{-}(P,Q):=P\rightarrow Q.
\]
A signed comparison
$P\rightsquigarrow_{\sigma}Q$ therefore requests a certificate of type
$\mathrm{Dir}^{\sigma}(P,Q)$. The theorem root has positive polarity;
conjunction, disjunction, and quantifier bodies preserve polarity, while
negation and implication antecedents reverse it.

The comparison record forms a finite well-founded tree, or an acyclic shared
graph. Certified leaves form a non-overlapping comparison frontier; a leaf may
represent an atomic correspondence or a larger many-to-many group.
Unchanged regions contribute identity certificates.

Certificates are scope-aware. A certificate below a binder must hold uniformly
over that binder and may use only variables and premises available at its
declared location. It may not depend on sibling branches, future witnesses,
the candidate theorem being proved, or the source conclusion being recovered.
Dependencies among reused certificates must be acyclic.

For structured objects, aligned dependent telescopes and well-typed field
adapters construct the outer object map
\[
e:X\rightarrow Y.
\]

\begin{theorem}[Scope-preserving compositional transfer]
\label{thm:compositional-transfer}
Assume that:
\begin{enumerate}[leftmargin=*,nosep]
\item the object adapters are well typed and define $e:X\rightarrow Y$;
\item for every $x:X$, there is a complete scope-aware comparison
\[
C_A(x):
A_0(x)\rightsquigarrow_{-}A_1(e(x));
\]
\item for every $x:X$ and $h:A_0(x)$, there is a complete scope-aware
comparison
\[
C_G(x,h):
G_0(x)\rightsquigarrow_{+}G_1(e(x));
\]
\item all comparison leaves are covered and their certificates are valid in
their declared scopes.
\end{enumerate}
Then structural composition constructs
\[
a:\forall x:X,\;A_0(x)\rightarrow A_1(e(x))
\]
and
\[
g:\forall x:X,\;
A_0(x)\rightarrow G_1(e(x))\rightarrow G_0(x),
\]
and therefore the transformer
$\kappa:S_1\rightarrow S_0$ of Equation~(\ref{eq:transfer}).
\end{theorem}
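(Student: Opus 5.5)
The approach is structural induction on the comparison tree, proving a single signed lemma and then instantiating it twice. Concretely, I would show that for every node $P\rightsquigarrow_{\sigma}Q$ of a complete scope-aware comparison whose leaves carry valid certificates, one can construct a term of type $\mathrm{Dir}^{\sigma}(P,Q)$. This term must be uniform in the binders and premises available at the node's declared scope. Applying the lemma at the root of $C_A(x)$ with $\sigma=-$ gives $A_0(x)\rightarrow A_1(e(x))$. Abstracting over $x$ then yields $a$. Applying it at the root of $C_G(x,h)$ with $\sigma=+$ gives $G_1(e(x))\rightarrow G_0(x)$, and abstracting over $x$ and $h$ yields $g$. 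Proposition~\ref{thm:transfer} then supplies $\kappa:S_1\rightarrow S_0$ via Equation~(\ref{eq:transfer}).

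The induction runs over the finite well-founded tree; for a shared acyclic graph, I would use a topological order of the reused certificates. The base cases are the frontier leaves, whose certificates are valid by assumption~4, and unchanged regions, which receive the identity. For the inductive steps I would handle each connective separately.
\begin{itemize}[leftmargin=*,nosep]
\item For conjunction and disjunction, polarity is preserved. Componentwise maps on $\land$ and $\lor$ combine $\mathrm{Dir}^{\sigma}$ certificates of the children; an omitted conjunct (replaced by $\top$) or disjunct (replaced by $\bot$) is handled by the trivial maps $P\rightarrow\top$ and $\bot\rightarrow P$ at the appropriate polarity.
\item For negation, polarity is reversed: from $Q\rightarrow P$ one obtains $\neg P\rightarrow\neg Q$ by contraposition, and symmetrically.
\item For implication $P_1\rightarrow P_2$ versus $Q_1\rightarrow Q_2$, the antecedent is compared at $-\sigma$ and the consequent at $\sigma$. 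At positive polarity we need $(Q_1\rightarrow Q_2)\rightarrow(P_1\rightarrow P_2)$, which is built as $\lambda f\,p.\;c_2(f(c_1\,p))$ with $c_1:P_1\rightarrow Q_1$ and $c_2:Q_2\rightarrow P_2$.
\item For fixed-domain $\forall$ and $\exists$, a certificate in the body that is uniform in the bound variable lifts by $\lambda$-abstraction or by mapping the witness.
\end{itemize}

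The main obstacle is the scope discipline, specifically in the implication and quantifier steps. When a consequent certificate may use the antecedent premise, the positive-implication construction must supply the source premise $p:P_1$, not the candidate one, to $c_2$. The typing requirement that certificates use only premises available at their declared location is exactly what guarantees $p$ is in context. For existentials, the witness chosen on one side must not depend on later witnesses or sibling branches; otherwise the mapped witness is ill-scoped. I would state the inductive hypothesis with an explicit context $\Gamma$ and prove that each constructed term is well typed in $\Gamma$. Here I would use assumption~4 together with the acyclicity of certificate dependencies, which rules out circular use of the very transfer being built. Changing domains under quantifiers is not handled inside the induction: it is confined to the outer object map $e$ by assumption~1, so inner quantifiers are fixed-domain and the lifting step goes through.
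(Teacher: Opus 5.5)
Your proposal is correct and follows the paper's own proof. The paper compiles $C_A(x)$ at negative polarity and $C_G(x,h)$ at positive polarity using the signed-compilation lemma (Lemma~\ref{lem:signed-compilation}), which it proves by the same structural induction and the same implication reversal $\lambda f\,p.\;t_2(f(t_1(p)))$, and then substitutes into Equation~(\ref{eq:transfer}); your extra cases (negation, omissions via $\top$/$\bot$, an explicit context $\Gamma$) fill in what the paper leaves terse, but one attribution is off: inner quantifiers are fixed-domain not because of assumption~1, but because the supported fragment only decomposes fixed-domain binders, and any inner domain change must come with an explicit directed adapter or be absorbed into a certified leaf covering a larger enclosing expression.
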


\begin{proof}
The object adapters construct $e(x)$ in dependency order. The negative
comparison $C_A(x)$ compiles to
$A_0(x)\rightarrow A_1(e(x))$, yielding $a(x)$, whereas the positive
comparison $C_G(x,h)$ compiles to
$G_1(e(x))\rightarrow G_0(x)$, yielding $g(x,h)$. Substitution into
Equation~(\ref{eq:transfer}) gives the required transformer.
\end{proof}

\begin{lemma}[Signed certificate compilation]
\label{lem:signed-compilation}
Every well-formed comparison tree
$P\rightsquigarrow_{\sigma}Q$ compiles to a term of
$\mathrm{Dir}^{\sigma}(P,Q)$ in its declared context.
\end{lemma}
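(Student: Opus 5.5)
We argue by structural induction on the well-founded comparison tree (or, for a shared acyclic graph, by induction along a topological order of its nodes), proving the stronger statement that for every context $\Gamma$ in which a node is declared, a well-formed node $P\rightsquigarrow_{\sigma}Q$ yields a term of $\mathrm{Dir}^{\sigma}(P,Q)$ in $\Gamma$. Throughout we use only the variables and premises in $\Gamma$; this is exactly the scope-awareness hypothesis. Because reused certificates have acyclic dependencies, the induction on a shared graph is well defined.

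\textbf{Base cases.} An unchanged region contributes the identity, which has type $P\rightarrow P$ for either sign. A certified frontier leaf comes with a certificate whose type is $\mathrm{Dir}^{\sigma}(P,Q)$ by validity in its declared scope (hypothesis~4 of Theorem~\ref{thm:compositional-transfer}). This also covers many-to-many groups, since they are treated as single leaves. An omitted conjunct is replaced by $\top$ and an omitted disjunct by $\bot$, so omissions are leaves with explicit certificates. For example, at negative polarity $P\rightarrow\top$ is trivial, and at positive polarity $\bot\rightarrow P$ is ex falso. Any other omission fails to be admissible and is excluded by well-formedness.

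\textbf{Inductive cases.} Each connective is handled by sign.
\begin{itemize}
\item \emph{Conjunction and disjunction} preserve $\sigma$. Children $Q_i\rightarrow P_i$ (for $\sigma=+$) give $Q_1\wedge Q_2\rightarrow P_1\wedge P_2$ componentwise, and $Q_1\vee Q_2\rightarrow P_1\vee P_2$ by case analysis. The case $\sigma=-$ is symmetric.
\item \emph{Negation} flips the sign. A child certificate of $\mathrm{Dir}^{-\sigma}$ yields $\mathrm{Dir}^{\sigma}$ of the negations by contraposition.
\item \emph{Implication.} For $P_1\rightarrow P_2$ versus $Q_1\rightarrow Q_2$ at sign $+$, the antecedent child gives $P_1\rightarrow Q_1$ (sign $-$), and the consequent child gives $Q_2\rightarrow P_2$. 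We compose these to $(Q_1\rightarrow Q_2)\rightarrow(P_1\rightarrow P_2)$. The consequent certificate may use the premise $P_1$ where declared, since it sits in the context extended by the antecedent.
\item \emph{Fixed-domain quantifiers} preserve $\sigma$. The body certificate is valid uniformly over the binder, so it is abstracted: for $\forall$ we apply pointwise, and for $\exists$ we reuse the same witness.
\item \emph{Mixed polarity} requires certificates in both directions by well-formedness, so either required direction is available.
\end{itemize}

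\textbf{Main obstacle.} The delicate step is the quantifier and witness case when the domain or the object changes. Here the certificate for $\exists y{:}Y.\,Q$ versus $\exists x{:}X.\,P$ needs an object adapter in the appropriate direction, and the body certificate must be instantiated at the adapted witness without depending on future witnesses or sibling branches. I would handle this by requiring, as part of well-formedness, that a changed binder carry a field adapter of the direction dictated by $\sigma$ (as in the construction of $e$ for the outer object). The body comparison is then taken in the context extended by the bound variable and its adapted image. The scope restrictions guarantee that the resulting lambda term is closed in $\Gamma$, and the induction then goes through as in the fixed-domain case.
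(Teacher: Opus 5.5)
Your proof is correct and follows essentially the paper's route. It is a structural induction in which certified leaves supply their arrows and unchanged regions give identities, conjunction and disjunction compose componentwise and by case analysis, and fixed-domain quantifiers use the body certificate uniformly; implication is the key reversal via $\lambda f\,p.\;t_2(f(t_1(p)))$, with scope preserved because each recursive call stays in its parent's context.

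Your extra cases (negation by contraposition, $\top$/$\bot$ omission leaves, changed domains) go beyond the paper's proof, which covers only fixed-domain quantifiers. In the changed-domain case, note that the adapter direction depends on the quantifier as well as on $\sigma$: for $\exists$ at sign $+$ you need $Y\to X$, the opposite direction to $e$.
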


\begin{proof}
The proof is by structural induction. Certified leaves use their supplied
arrows and unchanged leaves use identity. Conjunction composes child maps
componentwise, disjunction by case analysis, and fixed-domain quantifiers
require the body certificate uniformly over the bound variable.

The essential reversal occurs at implication. At positive polarity, maps
\[
t_1:P_1\rightarrow Q_1,
\qquad
t_2:Q_2\rightarrow P_2
\]
give
\[
\lambda f\,p.\;
t_2\bigl(f(t_1(p))\bigr)
:
(Q_1\rightarrow Q_2)
\rightarrow
(P_1\rightarrow P_2).
\]
At negative polarity both arrows reverse. Because every recursive call stays
within the scope established by its parent, binder and witness dependencies
are preserved.
\end{proof}

The construction is sufficient rather than complete. A local comparison that
cannot be decomposed by these structural rules may still be recoverable through
an explicit certificate for a larger enclosing expression.

\subsection{Omissions and Changed Domains}
\label{app:transfer:rules}

\paragraph{Omissions.}
An omission remains explicit in the comparison tree. Deleting a conjunct is
modeled by replacing it with $\top$, whereas deleting a disjunct is modeled
by $\bot$. The elementary maps
\[
P\rightarrow\top,
\qquad
\bot\rightarrow P
\]
then determine admissibility through polarity. For example,
\[
(A\land H)\rightarrow G
\quad\leadsto\quad
A\rightarrow G
\]
is recoverable: removing $H$ weakens the candidate assumptions. In contrast,
\[
A\rightarrow(G\land H)
\quad\leadsto\quad
A\rightarrow G
\]
does not recover the original conclusion. Any normalization that removes the
introduced $\top$ or $\bot$ leaves the omitted item and its original scope
recorded in the audit.

\paragraph{Changed quantifier domains.}
Suppose the source body $P(u)$ ranges over $U$ and the candidate body $Q(v)$
over $V$. A domain change is admissible only when the transfer has an
appropriately directed witness adapter (Table~\ref{tab:domain-adapters}).

\begin{table}[t]
\centering
\normalsize
\caption{Direction-sensitive adapters for changed quantifier domains.}
\label{tab:domain-adapters}
\begin{tabular}{@{}ccll@{}}
\toprule
Binder & Sign & Adapter & Required body transfer \\
\midrule
$\forall$ & $+$ & $d:U\to V$
& $Q(d(u))\to P(u)$ \\

$\forall$ & $-$ & $d:V\to U$
& $P(d(v))\to Q(v)$ \\

$\exists$ & $+$ & $d:V\to U$
& $Q(v)\to P(d(v))$ \\

$\exists$ & $-$ & $d:U\to V$
& $P(u)\to Q(d(u))$ \\
\bottomrule
\end{tabular}
\end{table}

The directions depend on whether the transfer must construct or consume a
quantified witness. Adapters may depend on variables already in scope, but not
on future witnesses. Injectivity or surjectivity is not required by the
logical rule itself; whether the adapter preserves the intended mathematical
representation remains a separate semantic question.

\subsection{Bridge Obligations and Deterministic Completion}
\label{app:bridge}
\label{sec:link23:freeze}
\label{sec:policy:bridge}
\label{sec:link23:gate}
\label{sec:policy:gate}

The structured audit does not itself turn a non-identical formalization into a
source-faithful theorem. When the stage policy admits a non-identical reading,
the candidate root $S_1$ is frozen and the system records an explicit bridge
obligation to a source-facing statement $S_0$. The audit record stores the
semantic differences and transfer evidence motivating this obligation.

Freezing binds the accepted theorem to its normalized signature and to the
source and local definitions used during the audit. Subsequent changes to a
protected signature or one of its bound inputs make the corresponding
certification stale rather than silently transferring approval to the modified
statement.

The prover then proves $S_1$ without changing its frozen signature and, for
each bridge obligation, states a theorem of the form
\[
\code{theorem <root>\_faithful : } S_0
\]
and proves it from the frozen root. The symbolic composition described above
may guide this proof, but it does not discharge the obligation: only the
compiled Lean theorem establishes the formal implication
\[
S_1\rightarrow S_0.
\]
Before it is frozen, the bridge statement is itself subjected to semantic
review so that proving an implication to an incorrectly stated $S_0$ cannot
silently satisfy the workflow.

\paragraph{Completion gate.}
A model or prover cannot declare a theorem complete merely by emitting a
completion token. Final acceptance is determined by a deterministic gate that
makes no new semantic-model call. For each completion root, the gate requires:

\begin{enumerate}[leftmargin=*,nosep]
\item the target-scope \code{sorry} count is available and zero;
\item the project builds successfully under the configured Lean environment;
\item every frozen theorem signature and its bound audit inputs remain current;
\item every required bridge obligation has been discharged and the frozen
bridge declaration is still present;
\item every completion root carries the required semantic certification; and
\item a transitive axiom audit completes without \code{sorryAx} laundering or
an axiom outside the configured allow-list.
\end{enumerate}

These checks separate proof search from acceptance. Reasoners, retrieved
skills, \Orchestrator{} supervision, and human guidance may influence how a proof is found,
but they cannot bypass the frozen statement, bridge, build, or axiom
requirements used to decide completion.

\subsection{Executable Fragment and Guarantee Boundary}
\label{app:transfer:implementation}
\label{app:transfer:limits}
\label{sec:link4:tools}

The compositional checker implements a sufficient fragment of the construction
above. Its input consists of ordered source and candidate object telescopes,
explicit object adapters, paired formula trees, a non-overlapping comparison
frontier, local certificates, and optional certified normalizations.

The supported logical fragment contains $\top$, $\bot$, conjunction,
disjunction, implication, and fixed-domain universal and existential
quantifiers. The checker recomputes polarity, verifies coverage and scope, and
checks that each certificate has the direction required by its occurrence.
Groups appearing at both polarities require evidence in both directions.
Unsupported constructors, incomplete coverage, malformed scope, cyclic or
undeclared dependencies, incorrectly directed certificates, or a nonidentity
item labelled \code{same} cause refusal.

For omissions, the checker supports the explicit
$\land\top$, $\top\land$, $\lor\bot$, and $\bot\lor$ cases rather than
performing unrestricted logical simplification. General changed-domain
witnesses are not synthesized automatically; they must be provided explicitly
or justified by a larger enclosing certificate.

On success, the checker constructs symbolic recipes for $e$, $a$, and $g$,
together with the recomputed polarities and reindexings. These recipes are
conditional evidence, not Lean terms or kernel verdicts. In particular,
deterministic validation can verify that required evidence is present,
well-scoped, internally consistent, and used in the correct logical direction,
but it cannot establish the truth of an arbitrary free-form mathematical
explanation supplied by the model.

Three boundaries are especially important. First, polarity cannot be ignored:
a relation valid in a conclusion may require the opposite direction when it
appears in an implication antecedent. Second, local agreement does not justify
changes in witness dependency; for example,
\[
\forall x\,\exists y,\;y=x
\]
does not in general justify
\[
\exists y\,\forall x,\;y=x.
\]
Third, a map between domains is not sufficient by itself. Although the
inclusion $\mathbb{Z}\hookrightarrow\mathbb{R}$ may support an appropriate
universal restriction, it cannot transfer
\[
\exists r\in\mathbb{R},\;2r=1
\]
to
\[
\exists z\in\mathbb{Z},\;2z=1
\]
without a valid reverse witness construction.

Finally, proof transfer and semantic faithfulness remain distinct guarantees.
If $S_0$ faithfully expresses the source, $S_1$ has been proved, and Lean
checks a bridge $S_1\rightarrow S_0$, then a proof of $S_0$ is
machine-checked. The bridge does not, however, establish that $S_0$ is the
correct interpretation of the original natural-language source. That
responsibility remains with the semantic audit. Thus the structured evidence,
the compositional transfer record, and the Lean bridge certify different parts
of the overall formalization claim.

\section{System Implementation}
\label{app:system}
\label{sec:overview}
\label{sec:capability}
\label{sec:advisory}

This appendix describes the implementation details needed to understand how
\Fyan{} maintains provenance, dependency-aware execution, protected theorem
interfaces, and advice-only interaction across a document-level formalization
project. The emphasis is on the boundaries that affect the claims in the main
text; model prompts, numerical defaults, and reproduction commands are collected
separately in Appendix~\ref{app:reproducibility}.

\subsection{Project Construction, Provenance, and Scheduling}
\label{sec:overview:intake}
\label{sec:overview:stages}
\label{sec:overview:scheduling}
\label{sec:overview:workspace}
\label{app:specforge:phases}
\label{app:specforge:rubric}
\label{app:specforge:lint}
\label{app:specforge:honest}

\paragraph{\Specifier{} and theorem units.}
\Specifier{} converts a source document into self-contained theorem units
\code{problem.tex} and a document-level dependency DAG. Each extracted statement
is stored with source provenance, including its source file, page range, and
anchor. Persisted statement catalogues are never truncated; truncation is used
only when constructing bounded model contexts.

An architect model groups statements into theorem units and proposes
dependencies using the document structure together with a deterministic
reference graph extracted from explicit cross-references. The proposed DAG is
then canonicalized and checked independently. In particular, deterministic
validation requires every source statement to belong to exactly one unit,
checks that dependency targets exist, rejects cycles, and recomputes the
topological layering from the dependency relation rather than trusting a
model-generated ordering.

After the plan is accepted, units are generated layer by layer. Each
\code{problem.tex} contains the mathematical setup, dependencies, hypotheses,
and claim needed by downstream stages. Source provenance is inserted by code
rather than by the generator. A validation pass compares each generated unit
with its assigned source statements using a structured rubric for statement
coverage, hypotheses, added assumptions, conclusions, and dependencies.
Independent deterministic lints detect selected high-risk changes, including
lost biconditionals, altered strict inequalities, newly introduced asymptotic
terms, malformed provenance, and undeclared cross-unit dependencies.

Critical findings trigger bounded regeneration rather than silent acceptance.
If validation discovers an undeclared dependency on another theorem unit, the
corresponding DAG edge may be added automatically only after the updated graph
has again passed cycle and layering checks. Thus models propose the unit
decomposition, while coverage, graph well-formedness, provenance insertion,
and selected consistency properties are handled deterministically.

\paragraph{Persistent state and scheduling.}
A gateway is the sole writer of theorem-stage state. Stage advancement is
derived from persisted artifacts and exit status rather than from an agent's
self-report, so a failed stage remains active rather than being recorded as
complete.

A separate Python \Orchestrator{} schedules theorem units in topological order.
A unit becomes ready only after all of its declared dependencies are complete.
Starting a single theorem restricts execution to the transitive closure of its
dependencies, while project-wide execution dispatches independent ready units
up to a configured parallelism limit.

All theorem units belong to a single Lake project, but each proving run uses a
per-theorem workspace. Toolchain and package files are shared, while the current
project aggregate is copied into the workspace and sibling theorem units are
linked. An unfinished theorem may therefore be imported during its own proof
search without being exposed as a completed module to the project as a whole.
Its import into the product aggregate is activated only after completion.
Downstream units receive upstream results only from proved output modules, not
from draft statement scaffolds.

\subsection{Reasoning, Planning, and Lean Proof Construction}
\label{sec:qed}
\label{sec:qed:qed}
\label{sec:blueprint}
\label{sec:dag}
\label{app:archon:provenance}
\label{app:archon:loop}
\label{app:archon:tools}

\paragraph{\Reasoner{} reasoning.}
\Reasoner{} is based on the open-source \QED{} system~\citep{an2026qed}
for natural-language proof generation. We preserve its decomposition, proving, and
adversarial-review structure, while making two changes needed for
document-level formalization. First, the provenance block from
\code{problem.tex} is propagated into the resulting \code{proof.md}. Second,
a proof step may treat an already proved upstream theorem unit as a black box
rather than re-proving it.

The adapted stage also tests selected plan assertions by searching for
counterexamples and counting them only when they have been executed in a
sandbox. Repeated executed refutations force plan revision, while verified
proof fragments independent of the rejected step may be retained.

\paragraph{\Planner{} and the lemma-level DAG.}
\Planner{} turns \code{problem.tex} and the natural-language proof into
three main artifacts: a lemma-level dependency DAG, a human-readable
\code{Blueprint.md}, and a compiling \code{Statement.lean} scaffold.

The proof is first reviewed for logical gaps and translated into a lemma-level DAG at
a controlled granularity. The resulting graph is checked for dangling edges,
self-loops, target reachability, and source-step references. Candidate Mathlib
lemmas are collected from semantic search, LeanSearch, and
Loogle~\citep{gao2024leansearch,loogle}; every candidate name is then checked
against the live Lean environment before being passed back to the model.
Already proved upstream project lemmas are retrieved from their actual Lean
declarations.

The statement scaffold deliberately leaves theorem bodies as
\code{:= by sorry}. Its purpose is to freeze the intended interfaces before
proof search, not to obtain premature proofs by weakening difficult
conclusions. A bounded compile-and-quality pass checks that the required root
declaration exists, rejects vacuous targets such as \code{True}, and verifies
that every blueprint node has a corresponding tagged declaration.

Three dependency graphs therefore coexist at different levels. The
\emph{document-level DAG} schedules theorem units; the \emph{lemma-level DAG} organizes the
planned argument inside one unit and becomes read-only during proving; and the
\emph{proof DAG} may grow during proof construction as the prover introduces
helper declarations. These roles are distinct: document dependencies determine
what may be reused, while proof-local decomposition determines how a particular
theorem is attacked.

\paragraph{\Formalizer{} proving loop.}
\Formalizer{} started as a fork of \Archon{}
v0.2.0~\citep{frenzymath2026archon,frenzymath2026archonfirstproof,ju2026conjecture}
and has since been substantially rewritten: it runs on a Pi-based coding-agent
runtime~\citep{pi-agent}, and every consequential state transition is decided
outside the model. Little of the original remains beyond the plan--prove--review
structure of its proving loop.

A theorem progresses through autoformalization, proving, and polishing stages.
Within an iteration, the plan agent proposes proof objectives, prover agents work
on those objectives under a global concurrency limit, deterministic
post-processing validates touched Lean modules and synchronizes proof metadata,
and a reviewer summarizes progress for the next iteration.

Provers interact with Lean through goal-state inspection, diagnostics, hover and
declaration lookup, tactic trials, a code runner, axiom checking, and several
search interfaces. These tools may guide search but do not determine theorem
acceptance. Provers also do not directly control the project build or completion
state.

\subsection{Protected Artifacts and Deterministic Completion}
\label{app:archon:gate}
\label{app:archon:protect}
\label{app:archon:outside}

After a candidate root statement passes semantic review, \Fyan{} stores its
normalized signature together with bindings to the source-facing input and the
evidence used to certify it. Subsequent modification of the protected theorem
signature or a bound input invalidates that certification rather than silently
transferring approval to a changed theorem. Bridge obligations for admitted
non-identical statements are protected in the same way; their semantic and
logical role is described in Appendix~\ref{app:audit-transfer}.

An agent-generated \code{COMPLETE} token is therefore only a request. Final
completion is decided by deterministic code. For every completion root, the
system requires a zero target-scope \code{sorry} count, a successful current
project build, intact protected theorem signatures and semantic
certifications, discharged bridge obligations when required, and a transitive
axiom audit with no \code{sorryAx} or disallowed custom axiom.

Several other operations are likewise deliberately outside model control,
including objective parsing, \code{sorry} counting, axiom auditing, synchronization
of deterministic proof-status markers, and the final completion decision.
Missing or stale evidence fails closed.

The system distinguishes this acceptance evidence from human-facing summaries.
After a unit finishes, \Auditor{} combines deterministic scans of declarations,
\code{sorry}, \code{admit}, and \code{axiom} occurrences with a model-written
review. The prose report may aid inspection, but deterministic counts take
precedence and the report does not participate in theorem acceptance.

\subsection{Curated Reuse, Supervision, and Human Guidance}
\label{sec:curator}
\label{sec:curator:extraction}
\label{sec:curator:promotion}
\label{sec:curator:trust}
\label{sec:curator:retrieval}
\label{sec:supervision}
\label{sec:supervision:layers}
\label{sec:supervision:comments}
\label{sec:overview:advice}
\label{sec:capability:curator}
\label{sec:capability:scale}
\label{sec:capability:supervisor}

\paragraph{\Curator{}.}
\Curator{} extracts reusable proof experience from completed runs. It reads the
final Lean source together with the structured run log and records proof
patterns, tactic patterns, successful decomposition strategies, and informative
failure events. Related episodes are clustered and may be synthesized into
reusable prose skills.

Because these skills are model-generated, they are treated as search guidance
rather than proof evidence. Every lemma named by a candidate skill is checked
against the live Lean environment, and only fully verified, actionable skills
pass the strict retrieval gate. A model may subsequently narrow this admitted
set for a particular proof obligation, but it cannot introduce a skill that
failed deterministic admission.

The prover receives skill metadata and may open a skill explicitly when it is
relevant to the current obligation. Lemma names, tactics, imports, and side
conditions found in a skill remain suggestions to be checked against Lean.
Consequently, \Curator{} can change which proof search is attempted but cannot
change whether the resulting theorem satisfies the completion gate.

We do not attribute an independent empirical effect to \Curator{} in this paper.

\paragraph{\Orchestrator{} and human guidance.}
In addition to scheduling theorem units, the \Orchestrator{} independently
checks ongoing \Formalizer{} proofs automatically and regularly,
cross-checking the mathematical argument, specification, and proof plan. Its
supervision component uses the OpenClaw runtime~\citep{openclaw} and is
intentionally advice-only. Its model has no direct filesystem, shell, web, or executable-Lean
write capability; file mutation associated with supervision is performed by a
restricted sidecar.

The sidecar monitors coarse signs of stagnation, including an unchanged
positive \code{sorry} count, repeated blockers, and iterations with no prover
edits. Such events may trigger review or enable additional critic agents.
These event-triggered reviews supplement the regular automatic checks;
\Formalizer{} may also explicitly request a review.

\Orchestrator{} advice is inserted only as non-executable comments. Questions to a
human operator are routed through the Dashboard, and the returned answers enter
the proving context through the same comment-only channel. Human and \Orchestrator{}
guidance can therefore affect strategy and search, but neither can directly edit
a protected theorem statement, inject executable proof code, or bypass the
completion gate.

We do not evaluate \Orchestrator{} supervision as an independent capability intervention.
Its role here is architectural: it provides an external observation and
communication channel without becoming another authority over proof acceptance.

\subsection{Pipeline Trust Boundaries}
\label{app:tools}

Table~\ref{tab:trust-boundaries} summarizes the separation between components
that propose mathematical content or search actions and mechanisms that decide
whether artifacts may advance.

\begin{table}[t]
\centering
\normalsize
\caption{Principal trust boundaries in the \Fyan{} pipeline.}
\label{tab:trust-boundaries}
\setlength{\tabcolsep}{4pt}
\renewcommand{\arraystretch}{0.95}
\begin{tabular}{@{}
  >{\raggedright\arraybackslash}p{3.0cm}
  >{\raggedright\arraybackslash}p{4.8cm}
  >{\raggedright\arraybackslash}p{5.2cm}@{}}
\toprule
Component & May influence & Cannot determine directly \\
\midrule

\Specifier{} models
& theorem grouping, candidate dependencies, generated unit text
& graph validity, deterministic provenance, final downstream semantic
acceptance \\

\Reasoner{} / \Planner{}
& proof strategy, lemma decomposition, candidate scaffold
& final protected theorem interface or proof completion \\

\Formalizer{} agents
& proof plans, helper lemmas, executable proof search
& completion state, protected signatures, axiom acceptance \\

\Fyan{} Judge
& structured comparison evidence
& deterministic record validity or Lean proof validity \\

\Curator{}
& retrieved proof patterns and search suggestions
& whether a resulting proof is accepted \\

\Orchestrator{} / human
& strategic advice, questions, critic activation
& executable Lean, protected statements, or completion \\

Deterministic gates
& stage transitions and acceptance from persisted evidence
& mathematical interpretation beyond the evidence they validate \\
\bottomrule
\end{tabular}
\end{table}

\iffalse
The semantic Judge may use configured local and external research interfaces to
obtain evidence, while the proving agent may use Lean inspection and theorem
search tools.
\fi
The \Fyan{} Judge uses no research interfaces: each audit call receives only
its own input. The proving agent may use Lean inspection and theorem
search tools. Retrieved information is never itself an acceptance decision.
When required evidence, tool output, or protected state is unavailable, the
corresponding deterministic check fails closed.

This division is the central implementation principle of the harness:
model-based components expand the space of candidate formalizations, arguments,
and proof searches, whereas persisted evidence and deterministic checks control
what is allowed to become a reusable project artifact.

\section{Reproducibility and Configuration}
\label{app:reproducibility}
\label{app:config}
\label{app:prompts}
\label{app:reproduction}

This appendix records the pipeline configuration and reproduction
information that is not already specified in the experimental protocols of
Appendix~\ref{app:evaluation}. Experimental model, sampling, benchmark, and
Lean settings are reported with the corresponding evaluations; here we focus
on the defaults that govern the \Fyan{} workflow, semantic audit, and
document-level execution.

\subsection{System Configuration}
\label{app:reproducibility:config}

All project-level settings may be overridden in
\code{.archon/config.json}. Table~\ref{tab:config-archon} summarizes the
principal \Formalizer{} and semantic-audit defaults used by the system.

\begin{table}[t]
\centering
\normalsize
\caption{Principal \Formalizer{} and semantic-audit defaults.}
\label{tab:config-archon}
\setlength{\tabcolsep}{4pt}
\renewcommand{\arraystretch}{0.90}
\begin{tabular}{@{}
  >{\raggedright\arraybackslash}p{9.0cm}
  >{\raggedright\arraybackslash}p{4.0cm}@{}}
\toprule
Setting & Default \\
\midrule
Maximum proving iterations
& 50 \\

Maximum concurrent provers, including nested subagents
& 4 \\

Maximum dispatched objectives per iteration
& 10 \\

Block execution on incomplete dependencies
& true \\

Semantic protection
& enabled \\

Faithfulness threshold
& \code{trivial} \\

Non-identical-statement policy
& \code{bridge} \\

Maximum statement-repair attempts
& 3 \\

Maximum bridge-repair attempts
& 3 \\

%Structured-record form-repair attempts
%& 3 \\
%
%Research interfaces available to the semantic Judge
%& enabled \\
%
%Maximum Judge tool-calling rounds
%& 8 \\
%
%Maximum Lean surface supplied to the Judge
%& 60{,}000 characters \\
%
%Maximum Judge output
%& 32{,}768 tokens \\
%
%Judge timeout
%& 180 seconds \\
Structured-record form-repair attempts
& 3 per \Fyan{} Judge call; 5 for source-tree preparation \\

Source-tree repair rounds
& 2 \\

Re-pairing calls for unpaired slots
& 1 \\

Research interfaces available to the \Fyan{} Judge
& none \\

Maximum complete \Fyan{} Judge input
& 700{,}000 bytes, never truncated \\

Maximum \Fyan{} Judge output
& provider maximum for DeepSeek models (393{,}216 tokens); otherwise
32{,}768 tokens \\

\Fyan{} Judge timeout
& none (no local deadline) \\

Enabled proving subagents
& none by default \\

Maximum concurrently scheduled theorem units
& 3 \\

\Orchestrator{} retry budget
& 1 \\
\bottomrule
\end{tabular}
\end{table}

The semantic policy distinguishes the threshold used for direct acceptance
from the treatment of admitted non-identical statements. Under the default
\code{bridge} policy, a substantive deviation that is allowed to advance must
still satisfy the bridge mechanism described in
Appendix~\ref{app:audit-transfer}; configuration does not override protected
signatures or the deterministic completion gate.

Table~\ref{tab:config-other} gives the main defaults for source decomposition
and curated reuse.

\setlength{\textfloatsep}{11pt plus 1pt minus 1pt}
\begin{table}[t]
\centering
\normalsize
\caption{Principal \Specifier{} and \Curator{} defaults.}
\label{tab:config-other}
\setlength{\tabcolsep}{4pt}
\renewcommand{\arraystretch}{0.90}
\begin{tabular}{@{}
  >{\raggedright\arraybackslash}p{9.0cm}
  >{\raggedright\arraybackslash}p{4.0cm}@{}}
\toprule
Setting & Default \\
\midrule
\Specifier{} maximum units generated per call
& 6 \\

\Specifier{} maximum prior-context budget
& 6{,}000 tokens \\

\Specifier{} maximum verification revisions
& 2 \\

\Specifier{} maximum units repaired in validation
& 10 \\

Maximum unit TeX size admitted to validation
& 40{,}000 characters \\

\Curator{} promotion threshold
& 3 projects \\

\Curator{} minimum observations
& 3 \\

Single-project minimum skill size
& 180 characters / 2 lemmas \\

Maximum retrieved skills per run
& 20 \\

Maximum stuck-triggered retrievals per run
& 5 \\

Minimum review confidence before rejection
& 0.3 \\

Cluster merge thresholds
& Jaccard $\ge 0.7$ automatic;
  $[0.4,0.7)$ reviewed \\
\bottomrule
\end{tabular}
\end{table}

\setlength{\parskip}{4.5pt}
These defaults specify operational budgets rather than correctness
guarantees. Changing them may alter search effort or the frequency of review,
but final theorem acceptance remains subject to the protected-artifact,
build, bridge, and axiom checks described in
Appendix~\ref{app:system}.

\subsection{Prompts and Semantic-Audit Input}
\label{app:reproducibility:prompts}

\iffalse
The semantic Judge uses
\code{.archon-src/prompts/faithfulness-judge.md}; a project may override this
file at
\code{.archon/prompts/faithfulness-judge.md}.
The user message assembled for each audit contains:

\begin{enumerate}[leftmargin=*,nosep]
\item the source statement together with its SHA-256 digest;
\item the reviewed Lean theorem signature;
\item a proof-stripped Lean surface in which relevant definition bodies are
retained and candidate-authored comments are removed; and
\item an indication of which research interfaces are available.
\end{enumerate}

The prompt asks the model to produce the structured semantic evidence record
described in Appendix~\ref{app:audit-transfer}. Research results supplied
through local search, LeanSearch, Loogle, web search, or page retrieval are
treated as evidence for particular comparisons rather than as acceptance
decisions.\looseness=-1
\fi
The semantic audit uses the prompts in
\code{.archon-src/prompts/coverage/}. Each stage is a fresh model call that
receives only its own input: \code{source.md} extracts the source tree with
literal quotations; \code{source\_review.md} reviews it independently;
\code{source\_repair.md} and \code{source\_repair\_review.md} repair a
rejected tree and review the repair; \code{alignment.md} is the \Fyan{} Judge;
\code{pairing.md} is appended to the unchanged \Fyan{} Judge input for one re-pairing
call when slots remain unpaired; and \code{discrepancy.md} routes a rejection
to source-tree repair, candidate re-extraction, statement repair, or source
clarification. The user message assembled for each \Fyan{} Judge call contains:

\begin{enumerate}[leftmargin=*,nosep]
\item the source text;
\item the reviewed source tree;
\item the candidate Lean tree extracted from the elaborated declaration, with
the bodies of the local definitions it reaches; and
\item the host-computed source and candidate slots with their polarities, and
the polarity of each source definition.
\end{enumerate}

The prompt asks the model to produce the structured semantic evidence record
described in Appendix~\ref{app:audit-transfer}; its full text is reproduced
at the end of this subsection.\looseness=-1

If a generated record fails deterministic schema or consistency validation,
a bounded form-repair attempt receives the validator complaint and the rejected
record as untrusted input. These retries repair the structure of the record;
a valid unfavorable semantic judgment is not repeatedly sampled as a vote.\looseness=-1

The proving pipeline uses separate prompts for statement construction and
polishing, including \code{prover-autoformalize.md} and
\code{prover-polish.md}. \Specifier{} uses
\code{system\_validator.md} for model-assisted unit validation.
The exact prompt files used for a reported run should be retained together
with the corresponding run artifacts rather than reconstructed from their
names alone.\looseness=-1

\paragraph{Judge prompt.}
The complete system prompt of the \Fyan{} Judge (\code{alignment.md}) follows
verbatim. In it, \code{equivalent}, \code{stronger}, and \code{weaker} name
the relations \code{mutated\_equivalent}, \code{mutated\_stronger}, and
\code{mutated\_weaker} of Table~\ref{tab:audit-relations}.
% (lstinputlisting) sections/prompts/alignment.txt
\begin{lstlisting}[basicstyle=\ttfamily\fontsize{6.6}{7.9}\selectfont,breaklines=true,
  columns=fullflexible,frame=single,framerule=0.3pt,xleftmargin=2pt,
  xrightmargin=2pt]
Compare a Lean statement with the natural-language source text ("source"). You
receive the reviewed source tree ("source_graph"), the actual Lean candidate
extracted from its elaborated Expr ("candidate"), and host-computed slots. Neither
tree can be edited. The original text is authoritative for meaning. Do not test
whether the theorem is true. Compare what the two statements say.

Slots. The host strips the outer telescope of each statement into slots: "binder" (a
quantified variable and its domain), "hypothesis" (one conjunct of an assumption)
and "goal" (one conjunct of the conclusion); binders and hypotheses restrict the
claim (negative), goals assert it (positive). Curried and conjoined hypotheses give
the same slots; structure nested inside a slot is one unit. Each slot lists id,
kind, polarity and "text" (source) or "lean" (candidate); "definition_polarity"
gives each source definition the sign of its uses ("neutral" = used as data;
"unused" = the formula never uses it: it needs no group). Write one row per group of
corresponding slots (a group means the conjunction of its slots under the paired
binders). Pair binders, hypotheses and goals in separate rows, unless one side is a
single slot that the other unpacks. Leave a slot with no counterpart out of every
row; the host scores it (a dropped source hypothesis or an extra candidate
conclusion makes a stronger theorem; anything else missing or extra is a
difference). A candidate binder marked "context" needs a row only if it realizes a
source variable; pair every other binder, e.g. [Field K] bundled with its carrier K.
A source hypothesis the candidate's types enforce joins that binder's row (condition
moved into a type). A candidate binder naming a value the source quantifies inside a
hypothesis ((exists L, P L) -> Q is forall L, P L -> Q) joins that hypothesis row
(currying); one whose hypotheses realize a source definition (initial values, a
recurrence) joins the row of the source slots using it (inlining); it must occur in
a candidate hypothesis of its row and in no candidate hypothesis outside it. List a
"defined_by" binder (v = t) realizing a used source definition, without its
hypothesis (placed with it), in that group's "candidate_slots"; the group is at best
trivial_equivalent (inlining).

Relations, for a slot group or a definition group (P = source part, L = Lean part):
- same: identical meaning and representation, one slot per side of the same kind
  (bound-variable renaming and de Bruijn shifts from proof binders are allowed).
- trivial_equivalent: identical meaning. The representation changes only by
  conversions from this list, named in "conversion" (comma-separated): bundling,
  currying, binder order, argument order, condition moved into a type, Mathlib vs
  local definition, inlining, implicit context, renaming, reindexing, standard
  unfolding. A row that regroups slots names bundling, condition moved into a type
  or standard unfolding (or the currying or inlining above); splitting a
  conjunction or a chained inequality (0 < d <= n) into several slots is bundling.
  If the definition groups already record every change a slot row needs, the row
  names those conversions. Standard unfolding: a difference an undergraduate who
  knows the concepts sees at a glance: elementary logic (not exists vs forall not,
  currying), a standard definition unfolded (units digit vs % 10, percent vs /100,
  degrees vs radians, abelian vs forall a b, a*b = b*a, a Mathlib predicate vs its
  defining formula), or a change of domain that an accompanying bound neutralizes
  (n : Int with n >= 9 vs n : Nat with 9 <= n; exists! k : Nat vs exists! k : Int
  when k > 0 is forced). Name it and give the one-line unfolding in "argument".
- equivalent: L and P are equivalent by a real argument beyond the listed
  conversions. Give it in "argument".
- stronger: L implies P, uniformly in the enclosing variables. For a binder the
  Lean domain is SMALLER (Nat for "integer"); a LARGER Lean domain is weaker (Nat
  for "positive integer" adds 0; Complex for a real; Int for Nat). Give "argument".
  A binder and the hypothesis that bounds it form one group: judge them together
  in one row (bundling, condition moved into a type or standard unfolding).
- weaker: P implies L, uniformly in the enclosing variables. Give "argument".
- unaligned: the meaning differs or cannot be established. Give "difference":
  what the source says versus what the candidate says, preferably an instance
  where they disagree (an index shift that changes which elements are covered,
  a flipped inequality, a different range or set, a missing case).
Choose the relation by what it takes to get from P to L, not by whether the theorem
survives. Any row may add "repair": the change that would make it same. Mixed rows
(joining negative and positive slots) and neutral definitions admit only same,
trivial_equivalent or equivalent. Be generous with genuine re-encodings; reserve
unaligned for changes in which situations the claim covers or what it says. Text
indexing a sequence from 1 vs Lean Nat -> A: hypothesis instances that only
constrain indices outside the source range (a recurrence at n = 0) are reindexing if
the goal never reads those indices and every source sequence has a value there
satisfying them (name it, e.g. a 0 = 0); otherwise they restrict.

Definitions. Each source definition the formula uses belongs to exactly one group
{"source_ids","candidate_names","members","parameters",relation,...}. Members assign
each source definition its candidate declaration roots (their union is
candidate_names; an empty list means inlined or missing, never same). Parameter
pairs map a source parameter to the FULL candidate declaration name and the index of
its lambda binder; use parameters=[] when none correspond (e.g. bundling). Read
every local candidate body: a definition means what its body computes, not what its
name says. A named notion the text does not expand is compared with what the text,
or the standard meaning of that name, says: its body must be that notion (same) or
it is unaligned. Realizing it concretely is never a conversion. Only if nothing
constrains the notion may you record the realization in "premises". Candidate names
may also be exact names from external_primitives: imported constants carry their
documented meaning (their bodies are not exported, which is not missing evidence).
An imported anchor is same only for a source symbol, alone or applied to its
definition's parameters in order, with no dependency on another definition; an
expanded source construction realized by an imported definition is
trivial_equivalent, "Mathlib vs local definition". List every local candidate
declaration except the target exactly once in "declarations" with its linked source
ids; empty links only for generated or inert machinery.

"premises": plain sentences you assumed but could not check (an imported constant's
meaning, the tree's reading of a recorded ambiguity). They never block. A premise
may never assume that a local definition means what the text says. Doubt is not a
finding: a believed difference is a row or a defect.

"meaning_audit" reads the ORIGINAL TEXT for what the rows cannot show, in this
order, each {"check","status":"ok"|"defect","evidence"}: hypotheses_present (each
condition the text states is in a source slot), no_extra_conditions (no candidate
condition hides in a definition, type or module that no row states),
conclusion_strength (each conclusion the text asserts is a source goal slot, with
its quantifier scope), notions_realized (each notion the text defines or constrains
is a definition with that body, not a free parameter, field, axiom, arbitrary choice
or trivially true body; name each; an unconstrained field of a locally declared
structure is such a defect: quote the text, name the field), domains_types (Nat
contains 0 and truncates n - 1, x / 0 and x % 0: if the source domain excludes 0,
compare the n = 0 case), non_vacuity (hypotheses satisfiable, conclusion not
trivial). A difference that a row or an unmatched slot records is not a defect. A
defect adds "source_quote" (verbatim text) and "lean", and states the difference.

A candidate field {"$ref":"E0001"} is an entry of the lossless expression_pool;
expand it recursively. Author no score or verdict; a valid unfavorable row is final.

Return JSON:
{"definitions":[{"source_ids":["D1"],"candidate_names":["Foo"],"relation":"same",
 "evidence":"...","members":[{"source_id":"D1","candidate_names":["Foo"]}],
 "parameters":[{"definition":"D1","source":"x","candidate_name":"Foo","candidate_index":0}]}],
 "declarations":[{"name":"Foo","source_ids":["D1"],"reason":"..."}],
 "slots":[{"source_slots":["s1"],"candidate_slots":["c1"],"relation":"same","evidence":"..."},
  {"source_slots":["s2"],"candidate_slots":["c2","c3"],"relation":"trivial_equivalent",
   "conversion":"condition moved into a type","evidence":"..."},
  {"source_slots":["s4"],"candidate_slots":["c5"],"relation":"weaker","argument":"...",
   "repair":"...","evidence":"..."}],
 "premises":[],
 "meaning_audit":[{"check":"hypotheses_present","status":"ok","evidence":"..."}, ...]}
\end{lstlisting}

\subsection{Reproduction Records and Independent Verification}
\label{app:reproducibility:verification}

A reproducible run requires more than the final generated theorem. For each
reported result, the reproduction record should bind the output to the
corresponding source corpus, configuration, composed prompts, model and
provider settings, random seed where applicable, Lean and Mathlib revisions,
validation code, aggregation code, retained run logs, and independent
verification output. Cryptographic hashes may be used to identify these
artifacts unambiguously.\looseness=-1

Proof claims are independently checked after generation. The verifier
recompiles each claimed theorem in the configured Lean environment, checks
that the protected declaration agrees with its frozen signature, rejects
\code{sorry} and \code{admit}, and audits the transitive axioms of every
completion root. When a bridge is required, the bridge theorem is included in
the same verification procedure.\looseness=-1

Semantic evidence and formal verification are retained separately. The former
records why the system considered a statement aligned with, or recoverable
from, its source; the latter establishes that the resulting Lean declarations
compile and depend only on permitted axioms. Reproducing only the final Lean
file therefore does not reproduce the full pipeline acceptance decision.
Resource accounting follows the same principle. If runtime usage is
unavailable for a run, that run is excluded from aggregate cost claims rather
than being assigned zero cost.

\end{document}